\newtheorem{theorem}{Theorem}
\newtheorem{lemma}{Lemma}
\theoremstyle{remark}
\DeclarePairedDelimiter\ceil{\lceil}{\rceil}
\begin{document}
%
\title{Self-Ensembling GAN for Cross-Domain Semantic Segmentation}

\author{Yonghao~Xu,~\IEEEmembership{Member,~IEEE},~ Fengxiang~He,~\IEEEmembership{Member,~IEEE},~Bo~Du,~\IEEEmembership{Senior~Member,~IEEE},\\ Dacheng~Tao,~\IEEEmembership{Fellow,~IEEE},~and~Liangpei~Zhang,~\IEEEmembership{Fellow,~IEEE} \thanks{Yonghao Xu and Fengxiang He contributed equally to this article. This work was supported in part by the National Natural Science Foundation of China under Grants 41871243, 62225113, 41820104006, 61871299, the Major Science and Technology Innovation 2030 key projects ``New Generation Artificial Intelligence'' (No. 2021ZD0111700), National Key Research and Development Program of China under Grant 2018AAA0101100, and the Science and Technology Major Project of Hubei Province (Next-Generation AI Technologies) under Grant 2019AEA170. \textit{(Corresponding authors: Liangpei Zhang; Bo Du.)}}
\thanks{Y. Xu is with the State Key Laboratory of Information Engineering in Surveying, Mapping, and Remote Sensing, Wuhan University, Wuhan 430079, China, and also with the Institute of Advanced Research in Artificial Intelligence (IARAI), 1030 Vienna, Austria (e-mail: yonghaoxu@ieee.org).}
\thanks{F. He and D. Tao are with JD Explore Academy, JD.com Inc., Beijing 100176, China (e-mail: fengxiang.f.he@gmail.com; dacheng.tao@gmail.com).}
\thanks{B. Du is with School of Computer Science, National Engineering Research Center for Multimedia Software, Institute of Artificial Intelligence, and Hubei Key Laboratory of Multimedia and Network Communication Engineering, Wuhan University, Wuhan 430079, China. (e-mail: dubo@whu.edu.cn).}
\thanks{L. Zhang is with the State Key Laboratory of Information Engineering in Surveying, Mapping, and Remote Sensing, Wuhan University, Wuhan 430079, China (e-mail: zlp62@whu.edu.cn).}
}

\markboth{IEEE Transactions on Multimedia,~Preprint, December~2022}%
{Xu \MakeLowercase{\textit{et al.}}: Self-Ensembling GAN for Cross-Domain Semantic Segmentation}

\IEEEtitleabstractindextext{%
\begin{abstract}
Deep neural networks (DNNs) have greatly contributed to the performance gains in semantic segmentation. Nevertheless, training DNNs generally requires large amounts of pixel-level labeled data, which is expensive and time-consuming to collect in practice. To mitigate the annotation burden, this paper proposes a self-ensembling generative adversarial network (SE-GAN) exploiting cross-domain data for semantic segmentation. In SE-GAN, a teacher network and a student network constitute a self-ensembling model for generating semantic segmentation maps, which together with a discriminator, forms a GAN. Despite its simplicity, we find SE-GAN can significantly boost the performance of adversarial training and enhance the stability of the model, the latter of which is a common barrier shared by most adversarial training-based methods. We theoretically analyze SE-GAN and provide an $\mathcal O(1/\sqrt{N})$ generalization bound ($N$ is the training sample size), which suggests controlling the discriminator's hypothesis complexity to enhance the generalizability. Accordingly, we choose a simple network as the discriminator. Extensive and systematic experiments in two standard settings demonstrate that the proposed method significantly outperforms current state-of-the-art approaches. The source code of our model is available online (https://github.com/YonghaoXu/SE-GAN).
\end{abstract}

\begin{IEEEkeywords}
Deep learning, domain adaptation, semantic segmentation, adversarial learning.
\end{IEEEkeywords}}

\maketitle

\IEEEdisplaynontitleabstractindextext

%
\IEEEpeerreviewmaketitle

\IEEEpeerreviewmaketitle

\section{Introduction}\label{sec:introduction}

%
%
%
%

\IEEEPARstart{D}{eep} neural networks (DNNs) have played a central role in semantic segmentation \cite{zhang2019decoupled,kang2018depth,chen2018importance}. Nevertheless, training DNNs usually requires large amounts of pixel-level annotated data, which is expensive and time-consuming to collect in real-world scenarios \cite{zhou2020sal,pan2022learning,fang2022incremental,zhang2020few}. A promising way to mitigate the annotation burden is to use synthetic images generated from cross-domain data in training the models \cite{wang2022cross,guan2021uncertainty,zhang2019curriculum}. However, these synthetic images usually have domain shifts from the real images \cite{domainshift}, which may severely undermine the performance, since supervised learning generally assumes that the training and test data share the same distribution \cite{mohri2018foundations}. Fig. \ref{fig:domain_shift} shows two major types of domain shifts in semantic segmentation, visual appearance shift and label distribution shift. Specifically, visual appearance shift refers to the shift in ``image styles" caused by differences in illumination, distortion, and imaging devices in different domains; and label distribution shift is defined as the difference in the marginal distribution of the spatial layout in the label space (for example, the spatial layout of objects and scenes may vary in different cities).

\begin{figure}
\centering
\includegraphics[width=\linewidth]{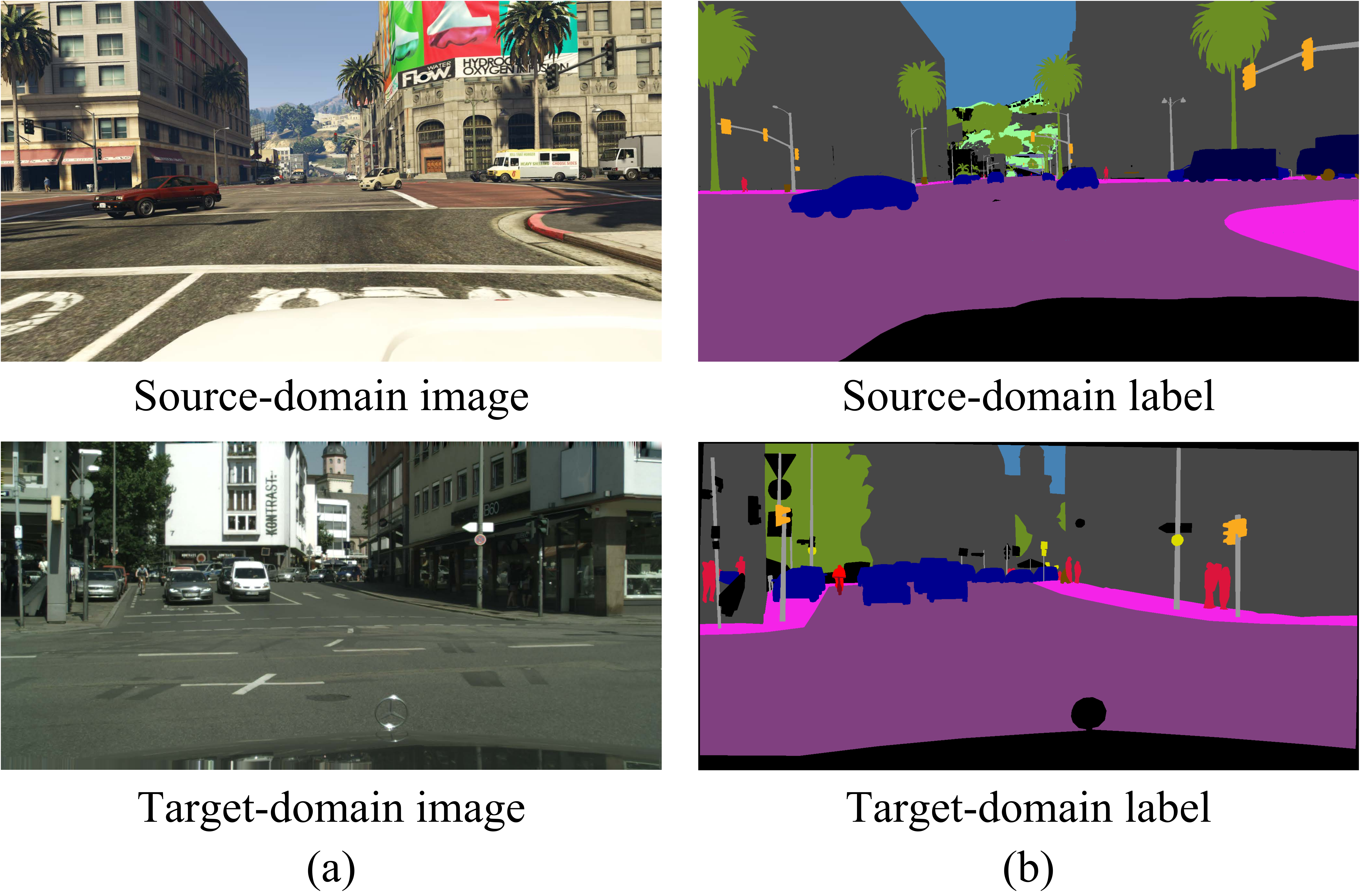}
\caption{An illustration of the domain shift phenomenon in semantic segmentation. Examples from the GTA-5 and Cityscapes datasets are selected as the source and target domains, respectively. (a) Visual appearance shift: images from different domains have distinctive visual styles. (b) Label distribution shift: the spatial layout in the label space may vary from different domains.}
\label{fig:domain_shift}
\end{figure}

To address this issue, domain adaptation techniques have been deployed to transfer knowledge between different domains \cite{song2020learning,chen2019learning}. Of these techniques, adversarial training-based methods have shown their potential \cite{adaptseg,luo2019taking,luo2019significance,tsai2019domain}.
One of the representative works is done by Tsai \textit{et al.}, where they proposed a novel framework to conduct domain adaptation in the output space with adversarial training \cite{adaptseg}. By alternately training the generator (which is implemented with a segmentation network) and the discriminator, the model is expected to learn domain-invariant segmentation results in the output space. This approach is later extended by Luo \textit{et al.}, where the category-level adversarial training is proposed to achieve fine-grained domain adaptation \cite{luo2019taking}.

\begin{figure}
\centering
\includegraphics[width=\linewidth]{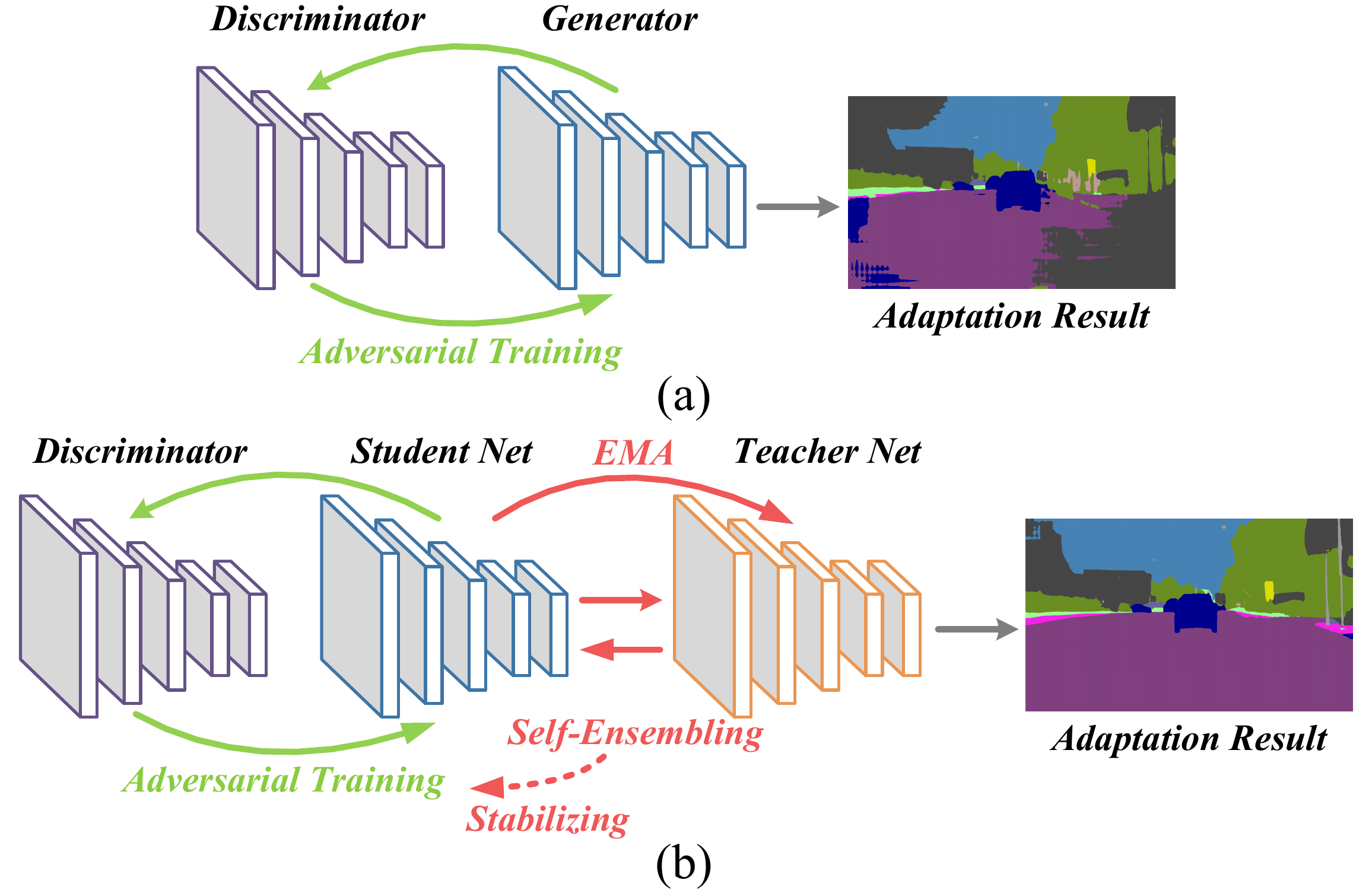}
\caption{An intuitional comparison between (a) the traditional adversarial training-based method and (b) the proposed SE-GAN. ``EMA'' denotes the exponential moving average. While traditional adversarial training adopts a single base model to act as the generator, we use a teacher network and student network to constitute a self-ensembling model which achieves a more stable generator and thereby stabilizes the adversarial training (see Fig. 6 for more details).}
\label{fig:adv_training}
\end{figure}

One disadvantage of the aforementioned methods is the instability of training. Since the adversarial training-based framework involves minimax optimization by alternately training the generator and discriminator, it is common to observe that the domain adaptation performance could fluctuate sharply between adjacent iterations and the training procedure is sometimes very fragile\footnote{https://github.com/wasidennis/AdaptSegNet/issues/29}.
This phenomenon may further lead to the risk of negative transfer, where the learned features could generalize badly in the target domain due to the unstable training of adversarial networks \cite{wu2018dcan}, resulting in poor performance.

This paper proposes a novel self-ensembling generative adversarial network (SE-GAN) for cross-domain semantic segmentation. Different from existing adversarial training-based approaches where only a single segmentation network is adopted to play the role of generator \cite{adaptseg,luo2019taking}, we propose to use the ensemble model to achieve a more stable generator. Specifically, we use a teacher network and a student network to constitute a self-ensembling model for generating cross-domain semantic segmentation maps, which together with a discriminator, forms a GAN. Both teacher and student networks share the same architecture. During the training phase, the student network is optimized through backpropagation, while the parameters of the teacher network are updated from the historical parameters of the student network in recent iterations. In this way, the teacher network unites many student networks to form an ensemble model. Since an ensemble model generally yields better predictions and possesses stronger robustness towards noises than a single base model (please refer to Fig. 1 in \cite{tarvainen2017mean} for more detailed discussions), it may also help to achieve a more stable generator and thereby stabilize the adversarial training. An intuitional comparison between the traditional adversarial training-based method and the proposed SE-GAN is presented in Fig. \ref{fig:adv_training}.

To summarize, the main contribution of this study is the proposed SE-GAN, which entangles two promising yet distinct schemes, adversarial training and self-ensembling. It inherits advantages from both regimes and addresses each other’s major critical shortcomings. Besides, we evaluate the proposed method from both theoretical and empirical perspectives. First, we theoretically analyze the proposed method and give an $\mathcal O\left(1/\sqrt{N}\right)$ generalization bound for SE-GAN, where $N$ is the size of the training set. This generalization bound suggests that when $N$ approaches infinity, the generalization gap between the target distribution and the empirical distribution of the generated data approaches $0$. Also, these generalization bounds suggest controlling the discriminator's hypothesis complexity for good generalization. Therefore, we employ a simple network as the discriminator. Second, extensive and systematic experiments on benchmark datasets suggest that our proposed method can significantly stabilize adversarial training and outperform current state-of-the-art approaches.

\section{Related Work}

\textbf{Semantic segmentation} is of fundamental importance in many computer vision tasks, which aims to assign a category label to each pixel in a given image \cite{fcn}. Representative works include SegNet \cite{segnet}, U-Net \cite{unet}, DeepLab \cite{deeplab}, PSPNet \cite{zhao2017pyramid}, and RefineNet \cite{lin2017refinenet}. Although these approaches perform well, their success is largely based on large amounts of pixel-level annotated data, which is very expensive and time-consuming to collect. To mitigate the annotation burden, Richter \textit{et al.} explored the use of realistic computer games for creating pixel-accurate ground truth data \cite{gta}. Ros \textit{et al.} proposed to use a virtual world to automatically generate realistic synthetic images with pixel-level annotations \cite{synthia}. However, the models directly trained on synthetic data fail to perform well on real-world datasets due to domain shifts \cite{zhang2019category,zhao2019multi,chen2019crdoco}.

\textbf{Adversarial training} has been widely deployed in domain adaptation tasks \cite{ganin2016domain}. Hoffman \textit{et al.} proposed the first adversarial training-based approach to tackle cross-domain semantic segmentation \cite{fcnswild} and considered both global and local adaptation techniques to learn domain-invariant features. Zhang \textit{et al.} proposed a novel fully-convolutional adaptation network for semantic segmentation, which combined appearance adaptation networks and representation adaptation networks \cite{zhang2018fully}. Tsai \textit{et al.} proposed a novel AdaptSeg framework to conduct domain adaptation in the output space to address the label distribution shift \cite{adaptseg}. Luo \textit{et al.} extended AdaptSeg with the category-level adversarial training to achieve fine-grained domain adaptation \cite{luo2019taking}. Vu \textit{et al.} proposed an entropy-based adversarial training approach to penalize low-confident predictions on target domain \cite{vu2019advent}. Pan \textit{et al.} further use adversarial training to conduct intra-domain adaptation \cite{pan2020unsupervised}. One disadvantage of these approaches is the risk of negative transfer, where the learned features may generalize badly in the target domain due to the unstable training of adversarial networks \cite{wu2018dcan}.

\begin{figure*}
\centering
\includegraphics[width=0.98\linewidth]{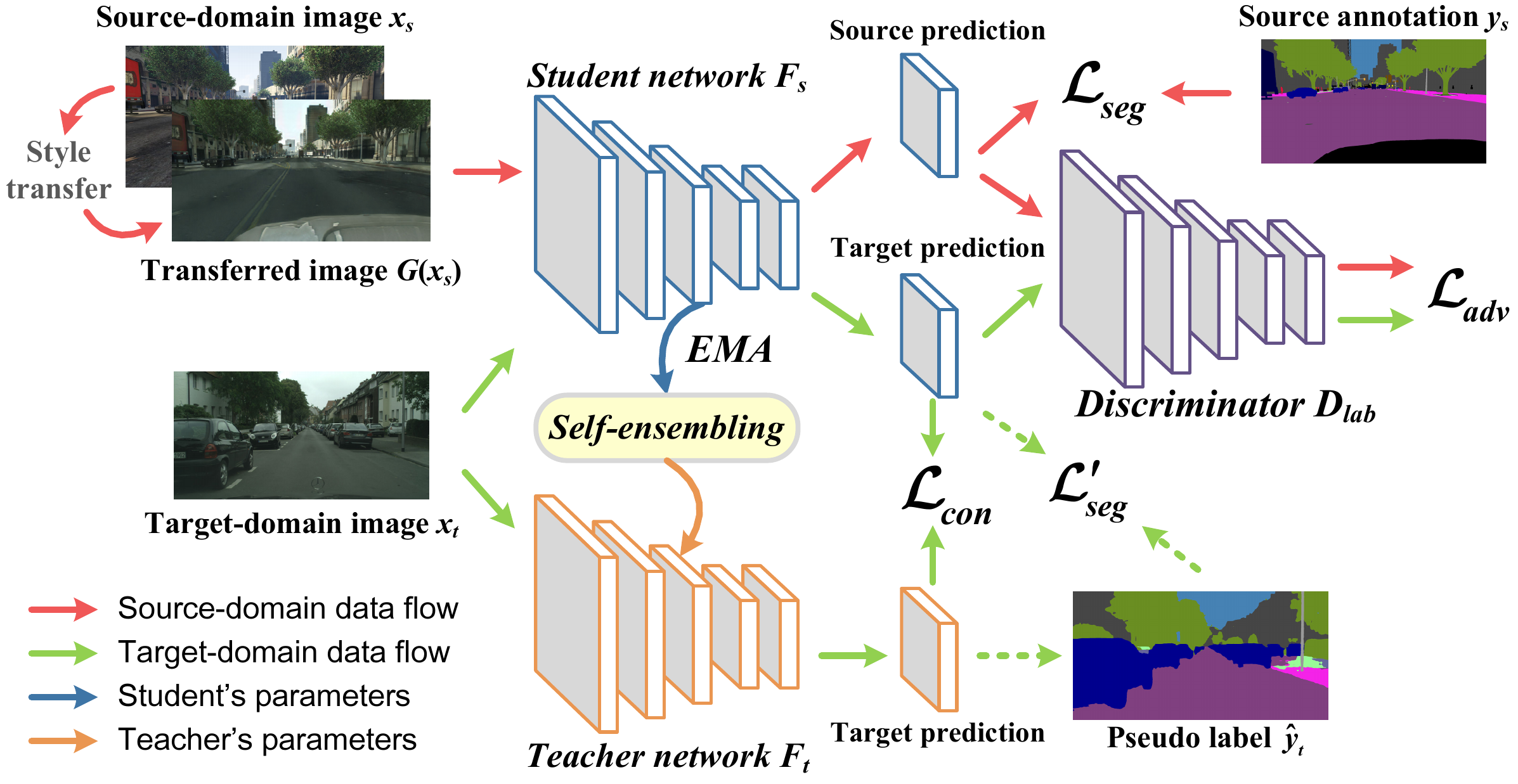}
\caption{An overview of the proposed self-ensembling generative adversarial network (SE-GAN) for cross-domain semantic segmentation. 
We optimize the student network with the combination of cross-entropy loss, adversarial loss, and consistency loss by the backpropagation algorithm. The parameters in the teacher network are updated with the exponential moving average (EMA) of the historical parameters in the student network at different iterations.
}
\label{fig:SEGAN}
\end{figure*}

\textbf{Self-ensembling networks} have demonstrated potentials in semi-supervised learning and cross-domain tasks, which originate from the $\Pi$-model and temporal model \cite{laine2016temporal,yao2020deep}. Tarvainen \textit{et al.} further designed a mean teacher model which achieved the state-of-the-art performance in semi-supervised classification tasks \cite{tarvainen2017mean}. French \textit{et al.} employed the mean teacher model to solve cross-domain image classification problem \cite{french2018self}. Xu \textit{et al.} further introduced self-ensembling to cross-domain semantic segmentation task  \cite{xu2019self}. Choi \textit{et al.} proposed to employ a style transfer network for data augmentation in self-ensembling model \cite{choi2019self}. Since there are no specific object functions in existing self-ensembling networks to address domain shifts, directly using self-ensembling to implement cross-domain semantic segmentation may bring about limited improvement.

In contrast to existing related methods, we adopt the self-ensembling model to play the role of a generator in the proposed SE-GAN, which inherits the stability of ensemble learning and can significantly improve the performance of adversarial training in cross-domain semantic segmentation.

\section{Methods}

\subsection{Overview of the Proposed Model}
SE-GAN proposes to adopt the self-ensembling model to act as the generator in adversarial training for domain adaptation. As shown in Fig. \ref{fig:SEGAN}, SE-GAN has three main components: a student network, a teacher network, and a discriminator. Both the student and teacher networks in SE-GAN share the same architecture. The student and teacher networks also play the role of a generator that aims to generate domain-invariant segmentation results in the label space. The discriminator, on the other hand, aims to distinguish whether the segmentation maps come from the source domain or the target domain. For each input source-domain image, we first use a style-transfer network (see Appendix A-C for details) to transfer its visual style to be consistent with that of the target domain, while simultaneously retaining its semantic information. This can help to address the visual appearance shift \cite{hoffman2018cycada}.

In the training phase, we optimize the student network with the weighted sum of cross-entropy loss, adversarial loss, and consistency loss by backpropagation. The parameters in the teacher network are updated with the exponential moving average (EMA) of the historical parameters in the student network at different iterations to achieve self-ensembling. Once the training is finished, we further use the teacher network to generate pseudo labels for unlabeled target-domain images and conduct self-training for the student network on the target domain.

\subsection{Self-ensembling Learning}
Due to factors such as illumination, distortion, and different imaging devices, images from different domains usually have distinctive visual styles, resulting in a visual appearance shift. An intuitive idea to address this issue is to transfer the style of the images from the source domain to be consistent with that of the target domain and then train the semantic segmentation model with both original source domain images and transferred images \cite{hoffman2018cycada}.

Formally, let $S$ and $T$ represent the source domain and target domain, respectively. Given a set of source-domain images $X_{S}$, the corresponding labels $Y_{S}$, and target-domain images $X_{T}$ (without annotations), we first use a style transfer network $G$ (see Appendix A-C for details) to transfer the visual style of $X_{S}$ in source domain $S$ to be consistent with that of target domain $T$ while retaining the semantic layout of image $X_{S}$. In this way, each source-domain image $x_{s}\in X_{S}$ and the corresponding transferred image $G\left(x_{s}\right)$ can share the same annotation $y_s$. Let $F_s$ and $F_t$ denote the student network and teacher network, respectively. The cross-entropy loss $\mathcal L_{seg}$ for the student network can therefore be defined as:
\begin{align}
    \mathcal{L}_{seg}\left(F_s\right)=-\frac{1}{2K}\sum_{k=1}^{K}\sum_{c=1}^{C}&y_s^{\left(k,c\right)}
    \left[\log \left(\sigma\left(F_s\left(x_s\right)\right)^{\left(k,c\right)}\right)\right. \nonumber\\
    &\left.+\log \left(\sigma\left(F_s\left(G\left(x_s\right)\right)\right)^{\left(k,c\right)}\right)\right],
\label{eq:1}
\end{align}
where $K$ and $C$ denote the number of pixels in the image, and number of categories in the segmentation task, respectively. $F_s\left(\cdot\right)$ denotes the segmentation output of the student network. $\sigma$ denotes the softmax function. Since the cross-entropy loss $\mathcal L_{seg}$ is calculated with both $x_s$ and $G(x_s)$, the visual appearance shift between the source domain $S$ and the target domain $T$ can thereby be reduced.

In order to achieve the self-ensembling mechanism, we then feed each target-domain image $x_{t}\in X_{T}$ into both the student network $F_s$ and the teacher network $F_t$. The consistency loss $\mathcal L_{con}$ can be formulated as:
\begin{equation}
    \mathcal{L}_{con}\left(F_s\right)=\frac{1}{K}\sum_{k=1}^{K}\sum_{c=1}^{C}\left\|\sigma\left(F_s\left( x_t\right)\right)^{\left(k,c\right)}
    -\sigma\left(F_t\left(x_t\right)\right)^{\left(k,c\right)}\right\|^2,
\label{eq:2}
\end{equation}
where $F_t\left(\cdot\right)$ denotes the segmentation output of the teacher network. With the constraint in (\ref{eq:2}), the student network $F_s$ can gradually learn from the teacher network $F_t$ on the target domain $T$.

\subsection{Output Space Domain Adaptation with Ensemble Learning}
Recall that our goal is to learn a domain-invariant feature representation in the label space. Inspired by the work in \cite{adaptseg}, we define a discriminator $D_{lab}$ which aims to distinguish whether the input segmentation map comes from the source domain $S$ or the target domain $T$. Accordingly, the student network $F_s$ acts as the generator that aims to fool $D_{lab}$. The corresponding adversarial loss $\mathcal{L}_{adv}$ can be formulated as:
\begin{align}
\mathcal{L}_{adv}\left(F_s,D_{lab}\right)&= \mathbb{E}_{x_s\sim X_S}\left[\log \left(1-D_{lab}\left(F_s\left(x_s\right)\right)\right)\right.\nonumber\\
&+\left.\log \left(1-D_{lab}\left(F_s\left(G\left(x_s\right)\right)\right)\right)\right] \nonumber\\
& + \mathbb{E}_{x_t\sim X_T}\left[\log D_{lab}\left(F_s\left(x_t\right)\right)\right].
\label{eq:3}
\end{align}

Note that we regard the label of $F_s\left(G\left(x_s\right)\right)$ as the source domain, since style transfer does not change the spatial layout of $x_s$ in the label space. Similar to previous adversarial training based methods \cite{adaptseg}, we optimize (\ref{eq:3}) through a minimax two-player game. Specifically, we alternately optimize $\mathop{\min}_{F_s}\mathcal{L}_{adv}$ and $\mathop{\max}_{D_{lab}}\mathcal{L}_{adv}$.

The full objective function for training the proposed SE-GAN can be formulated as:
\begin{equation}
\mathop{\min}_{F_s}\mathop{\max}_{D_{lab}}\mathcal{L}_{seg}\left(F_s\right)+\lambda_{con}\mathcal{L}_{con}\left(F_s\right)+\lambda_{adv}\mathcal{L}_{adv}\left(F_s,D_{lab}\right),
\label{eq:4}
\end{equation}
where $\lambda_{con}$ and $\lambda_{adv}$ are two weighting factors.

\begin{algorithm}
\caption{Training SE-GAN}
    \label{alg:segan}
    {\bf Input:}
\begin{enumerate}[-]
\item The labeled source-domain training set $X_{S}$ and $Y_{S}$, and the unlabeled target-domain training set $X_{T}$.
\item A student network $F_s$, a teacher network $F_t$, a trained style transfer network $G$, and a discriminator $D_{lab}$.
\item The hyper-parameters $\lambda_{con}$, $\lambda_{adv}$, and $\alpha$.
\end{enumerate}
 \begin{algorithmic}[1]
    \STATE Initialize $F_s$ and $F_t$ with the ImageNet pre-trained parameters; Initialize $D_{lab}$ with the random Gaussian values.
    \FOR{$t$ in $range\left(0,maxiter\right)$}
    \STATE Get $x_s\in X_S$, $y_s\in Y_S$, $x_t\in X_T$.
    \STATE Transfer $x_s$ into the target-domain style by $G\left(x_s\right)$.
    \STATE Compute the cross-entropy loss $\mathcal L_{seg}\left(F_s\right)$, the consistency loss $\mathcal L_{con}\left(F_s\right)$, and the adversarial loss $\mathcal{L}_{adv}\left(F_s,D_{lab}\right)$ via \eqref{eq:1}, \eqref{eq:2}, and \eqref{eq:3}.
    \STATE Update $F_s$ by descending its stochastic gradient via $\nabla_{F_s}\left(\mathcal{L}_{seg}+\lambda_{con}\mathcal{L}_{con}+\lambda_{adv}\mathcal{L}_{adv}\right).$
    \STATE Update $D_{lab}$ by ascending its stochastic gradient via $\nabla_{D_{lab}}\mathcal{L}_{adv}.$
    \STATE Update $F_t$ by the EMA via \eqref{eq:5}.
    \ENDFOR
    \FOR{$t$ in $range\left(0,maxiter\right)$}
    \STATE Get $x_t\in X_T$.
    \STATE Generate $x_t$'s pseudo label $\hat{y}_t$ by $F_t$.
    \STATE Compute the cross-entropy loss $\mathcal{L}_{seg}^\prime\left(F_s\right)$ via \eqref{eq:6}.
    \STATE Update $F_s$ by descending its stochastic gradient via $\nabla_{F_s}\mathcal{L}_{seg}^\prime.$
    \ENDFOR
\end{algorithmic}
{\bf Output:} The trained student network $F_s$.
\end{algorithm}

Owing to the constraint of the consistency loss $\mathcal L_{con}$ in (\ref{eq:2}), the student network could gradually get regularized by the teacher network, thus showing stronger stability in the adversarial training phase compared to traditional methods (see Fig. \ref{fig:training_plot} for more details).

It should be noticed that the parameters in the teacher network $F_t$ do not participate in backpropagation. Instead, we use the exponential moving average (EMA) of $F_s$ to manually update $F_t$ after each iteration. Let $\theta_s^{i}$ and $\theta_t^{i}$ denote the parameters of $F_s$ and $F_t$ at the $i$th iteration, respectively. Then, $\theta_t^{i}$ can be updated by:
\begin{equation}
    \theta_t^{i}=\alpha\theta_t^{i-1}+\left(1-\alpha\right)\theta_s^{i},
\label{eq:5}
\end{equation}
where $\alpha$ is a smoothing coefficient, and $\theta_t^{0}$ is initialized with the ImageNet pre-trained parameters.

\subsection{Self-training with Teacher's Pseudo Labels}
Pseudo labeling is a commonly used technique in semi-supervised learning \cite{lee2013pseudo} and has recently been introduced into the cross-domain semantic segmentation task \cite{zou2018unsupervised}. Different from previous self-training approaches which may require progressive selections for the most confident pseudo labels \cite{du2019ssf}, we simply conduct self-training with all teacher's pseudo labels on the target-domain images to finetune the student network $F_s$, since the optimized teacher network $F_t$ could already provide stable and high-quality pseudo labels.

Specifically, for each unlabeled image $x_{t}\in X_{T}$, we forward it into the teacher network $F_t$ and get the soft segmentation map $p_t=F_t\left(x_t\right)$. Then, $p_t$ is converted to the corresponding pseudo label $\hat{y}_t$, where each entry is an one-hot vector. The self-training is achieved by minimizing the following cross-entropy loss:
\begin{equation}
    \mathcal{L}_{seg}^\prime\left(F_s\right)=-\frac{1}{K}\sum_{k=1}^{K}\sum_{c=1}^{C}\hat{y}_t^{\left(k,c\right)}\log \left(\sigma\left(F_s\left(x_t\right)\right)^{\left(k,c\right)}\right).
\label{eq:6}
\end{equation}

The detailed training procedure of the proposed SE-GAN is shown in Algorithm~\ref{alg:segan}.

\section{Theoretical Analysis}
This section theoretically evaluates the proposed method. Specifically, we analyze the hypothesis complexity and generalization ability of the proposed SE-GAN. The results theoretically guarantee the performance of our method. A recent work proves a generalization bound that only relies on the hypothesis complexity of the discriminator (see Lemma  \ref{lemma:generalization_bound_gan}
 in Appendix A;
 cf. Theorem 3.1).
This result demonstrates that to achieve good generalization performance, one needs to restrict the complexity of the discriminator. Following this idea, we employ a simple CNN as the discriminator in SE-GAN.

Denote the target distribution and the distribution of generated data $\mu$ and $\nu$, respectively. Suppose the size of training samples is $N$. Denote the empirical distribution of the training set $\hat \mu_{N}$ and the empirical distribution of the generated data $\nu_{N}$. Suppose all the generators $g$ and all the discriminators $f$ form hypothesis classes $\mathcal G$ and $\mathcal F$, respectively.

An adversarial training model is utilized to generate a group of new examples that are identically distributed and follow the same distribution as the existing data. The adversarial training procedure minimizes the gap between the target distribution $\nu$ and the distribution $\hat \mu_{N}$ of the generated data. Mathematically, the adversarial training model minimizes the integral probability metric (IPM) $d_{\mathcal F} (\hat \mu_{N}, \nu)$ between the distributions $\hat \mu_{N}$ and $\nu$ \cite{muller1997integral}, which is defined as:
\begin{equation}
\label{eq:IPM}
    d_{\mathcal F} (\hat \mu_{N}, \nu) \triangleq \sup_{f \in \mathcal F} \left\{ \mathbb E_{x \in \hat \mu_{N}} [f(x)] - \mathbb E_{x \in \nu} [f(x)] \right\}.
\end{equation}

As demonstrated in Fig. \ref{fig:SEGAN}, the employed discriminator in SE-GAN is constituted by a series of convolutional layers and nonlinear activations (nonlinearities). We denote them $(A_{1}, \sigma_{1}, A_{2}, \sigma_{2}, A_{3},\sigma_{3}, A_{4}, \sigma_{4}, A_{5}, \sigma_5)$, where $A_{i}$ is the weight matrix of the $i$-th convolutional layer and $\sigma_{i}$ is a nonlinearity. Then, we can upper bound the hypothesis complexity of the discriminator.

\begin{theorem}[Covering bound for the discriminator]
\label{thm:cover_bound}
    Suppose 
    the spectral norm of each weight matrix is bounded: $\|A_{i}\|_{\sigma} \le s_{i}$. Also, suppose each weight matrix $A_{i}$ has a reference matrix $M_{i}$, which satisfies that $\| A_{i} - M_{i} \|_{\sigma} \le b_{i}$, $i = 1, \ldots, 5$. The Lipschitz constant of $\sigma_5$ is denoted as $\rho$. Then, the $\varepsilon$-covering number satisfies that
    \begin{align}
    \label{eq:cover_bound}
        & \log \mathcal N \left(\mathcal F|_{S}, \varepsilon, \| \cdot \|_{2} \right) \nonumber\\
        \le & \frac{\log\left( 2W^{2} \right) \| X \|_{2}^{2}}{\varepsilon^{2}} \left( \rho \prod_{i = 1}^{5} s_{i} \right)^{2} \left( \sum_{i = 1}^{5} \frac{b_{i}^{2/3}}{s_{i}^{2/3}} \right)^3,
    \end{align}
    where $W$ is the largest dimension of the feature maps throughout the algorithm.
\end{theorem}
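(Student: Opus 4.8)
The plan is to follow the spectral-complexity covering argument (in the style of Bartlett, Foster, and Telgarsky), adapting it to the five-layer convolutional discriminator. First I would rewrite the discriminator as the composition $f = \sigma_5 \circ A_5 \circ \sigma_4 \circ A_4 \circ \cdots \circ \sigma_1 \circ A_1$, re-expressing each convolution as a matrix multiplication acting on a reshaped (im2col) feature map, so that $A_i$ becomes an ordinary weight matrix and the largest feature-map dimension is absorbed into the single quantity $W$. This lets me treat the network uniformly as an alternating composition of linear maps and Lipschitz nonlinearities, with $\sigma_5$ carrying Lipschitz constant $\rho$ and the intermediate nonlinearities taken to be $1$-Lipschitz.

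The technical core is a per-layer matrix-covering lemma obtained via Maurey's empirical (sparsification) method: for a fixed input matrix $X_{i-1}$ and weights $A_i$ lying in a $b_i$-ball (in the $\|\cdot\|_{2,1}$ distance) around the reference $M_i$, the image set $\{X_{i-1} A_i^\top\}$ admits an $\varepsilon_i$-cover in Frobenius norm of log-size $\log \mathcal N \lesssim (b_i^2 \|X_{i-1}\|_2^2 / \varepsilon_i^2)\log(2W^2)$. This is where the characteristic $\|X\|_2^2/\varepsilon^2$ scaling and the $\log(2W^2)$ factor enter; the cube-root structure does not yet appear at this stage.

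Next I would propagate these per-layer covers through the composition. A perturbation introduced at layer $i$ is amplified by the product of the spectral norms of the subsequent linear layers together with the output Lipschitz constant, i.e.\ by $\rho \prod_{j>i} s_j$, while the norm of the signal reaching layer $i$ is controlled by $\|X_{i-1}\|_2 \le \|X\|_2 \prod_{j<i} s_j$. Chaining the covers (selecting a cover element at each layer conditioned on the previous choices) then yields a cover of the whole class $\mathcal F|_{S}$ whose total radius is $\sum_i (\rho \prod_{j>i} s_j)\varepsilon_i$ and whose log-cardinality is the sum of the per-layer bounds.

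Finally I would allocate the budget: minimize $\sum_i b_i^2 (\prod_{j<i} s_j)^2 \|X\|_2^2 \varepsilon_i^{-2}$ subject to $\sum_i (\rho \prod_{j>i} s_j)\varepsilon_i \le \varepsilon$. Solving this constrained minimization (Lagrange multipliers, equivalently a Hölder inequality with exponent $2/3$) is precisely what produces the $\left(\sum_i b_i^{2/3}/s_i^{2/3}\right)^3$ factor and the overall prefactor $(\rho \prod_i s_i)^2$, after using $\prod_{j<i}s_j \cdot \prod_{j>i}s_j = \prod_j s_j / s_i$. I expect the main obstacle to be the per-layer Maurey covering lemma and the careful bookkeeping of the error-propagation step through the nonlinearities, rather than the closing optimization, which becomes a routine Lagrange computation once the amplification factors have been identified correctly.
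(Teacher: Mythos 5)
Your proposal follows essentially the same route as the paper's own proof: both instantiate the Bartlett--Foster--Telgarsky spectral covering argument, combining the per-layer Maurey matrix covering bound (the paper's Lemma \ref{matrixCover}) with the chained propagation of per-layer covers through the Lipschitz layers (the paper's Lemma \ref{coverChainBoundGeneral}), with the same amplification factors $\rho\prod_{j>i}s_j$ and signal bounds $\|X\|_2\prod_{j<i}s_j$. The only divergence is cosmetic: you optimize the radius allocation (Lagrange/H\"older) to produce the factor $\left(\sum_{i=1}^{5} b_i^{2/3}/s_i^{2/3}\right)^{3}$ directly, whereas the paper's writeup uses a fixed allocation ending in $\sum_{i=1}^{5} b_i^{2}/s_i^{2}$, which is dominated by the stated cube form via H\"older, so your version is if anything the more direct derivation of the theorem as stated.
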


A detailed proof is omitted here but provided in the Appendix. Finally, we obtain the following theorem. 
For brevity, we denote the right-hand side (RHS) of (\ref{eq:cover_bound}) $\frac{R^{2}}{\varepsilon^{2}} $.

\begin{figure*}[t]
\centering
\includegraphics[width=0.95\linewidth]{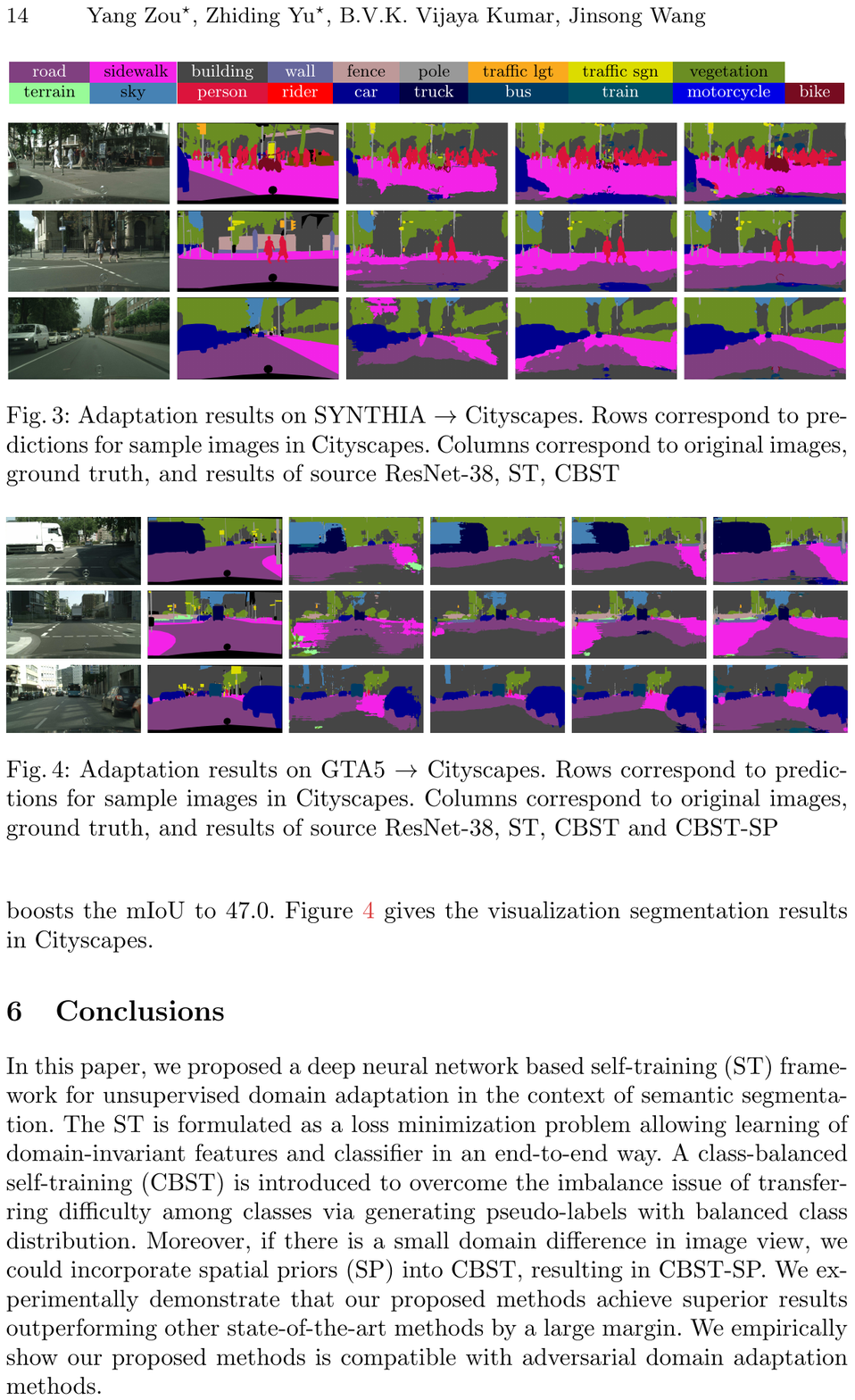}
\includegraphics[width=0.95\linewidth]{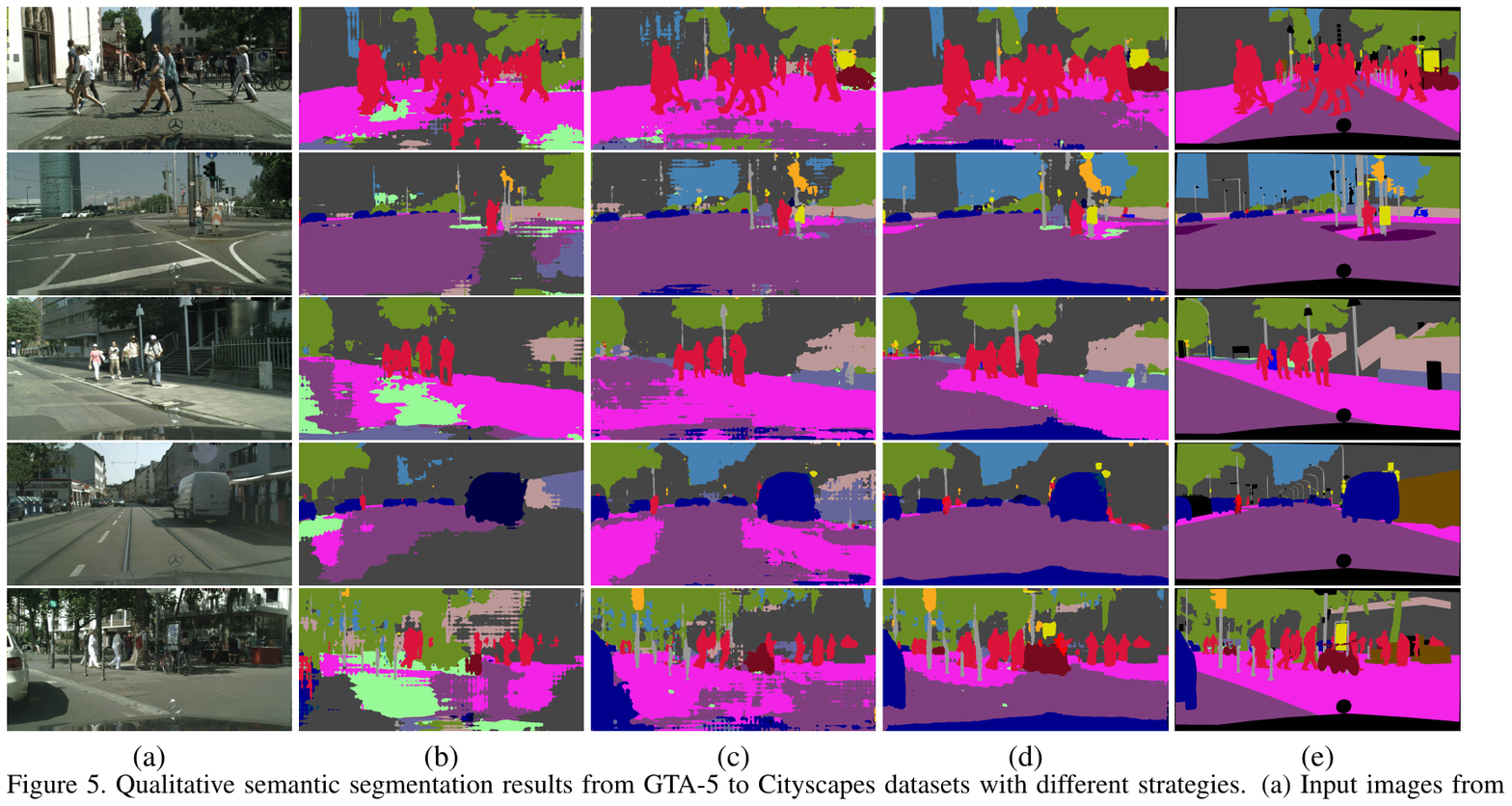}
\caption{Qualitative semantic segmentation results from GTA-5 to Cityscapes datasets with different strategies. (a) Input images from Cityscapes dataset. (b) NoAdapt. (c) Adversarial Training. (d) SE-GAN. (e) Ground-truth annotations.}
\label{fig:segablation}
\end{figure*}

\begin{theorem}
\label{thm:generalization_bound}
    Assume that the discriminator set $F$ is even, i.e., $f \in \mathcal F$ implies $-f \in \mathcal F$ and that all discriminators are bounded by $\Delta$, i.e., $\| f \|_{\infty} \le \Delta$ for any $f \in \mathcal F$. 
    Assume $\hat \mu_{N}$ and $\nu_{N}$ 
    satisfy
    \begin{equation}
        d_{\mathcal F} (\hat \mu_{N}, \nu_{N}) \le \inf_{\nu \in \mathcal G} d_{\mathcal F} (\hat \mu_{N}, \nu) + \phi.
    \end{equation}
    Then, with probability at least $1 - \delta$, we have
    \begin{align}
    \label{eq:generalization_bound}
        & d_{\mathcal F}(\mu, \nu_{N}) - \inf_{\nu \in \mathcal G} d_{\mathcal F}(\mu, \nu) \nonumber\\
        \le & \frac{24R}{N} \left( 1 + \log \frac{N}{3R} \right) + 2 \Delta \sqrt{\frac{2 \log (\frac{1}{\delta})}{N}} + \phi.
    \end{align}
\end{theorem}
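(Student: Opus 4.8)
The plan is to reduce the generalization gap to a single uniform-deviation quantity, $d_{\mathcal F}(\mu, \hat\mu_{N})$, and then control that quantity with standard empirical-process tools fed by the covering bound of Theorem~\ref{thm:cover_bound}. First I would record that, because $\mathcal F$ is even, the IPM $d_{\mathcal F}$ is symmetric, and since it is a supremum of differences of expectations it automatically satisfies the triangle inequality; hence $d_{\mathcal F}$ is a pseudometric on distributions. Let $\nu^{\star} \in \mathcal G$ (nearly) attain $\inf_{\nu \in \mathcal G} d_{\mathcal F}(\mu, \nu)$. Chaining the triangle inequality with the optimization hypothesis $d_{\mathcal F}(\hat\mu_{N}, \nu_{N}) \le \inf_{\nu} d_{\mathcal F}(\hat\mu_{N}, \nu) + \phi \le d_{\mathcal F}(\hat\mu_{N}, \nu^{\star}) + \phi$ gives
\begin{align*}
d_{\mathcal F}(\mu, \nu_{N}) &\le d_{\mathcal F}(\mu, \hat\mu_{N}) + d_{\mathcal F}(\hat\mu_{N}, \nu_{N}) \\
&\le 2\, d_{\mathcal F}(\mu, \hat\mu_{N}) + d_{\mathcal F}(\mu, \nu^{\star}) + \phi ,
\end{align*}
so that $d_{\mathcal F}(\mu, \nu_{N}) - \inf_{\nu} d_{\mathcal F}(\mu, \nu) \le 2\, d_{\mathcal F}(\mu, \hat\mu_{N}) + \phi$. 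This is precisely the reduction abstracted by the general discriminator-complexity bound referenced before the theorem, and everything now rests on bounding $d_{\mathcal F}(\mu, \hat\mu_{N})$ with high probability.

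For the high-probability step I would view $\Phi(X_{1},\dots,X_{N}) = d_{\mathcal F}(\mu, \hat\mu_{N}) = \sup_{f \in \mathcal F}\{\mathbb E_{\mu}[f] - \frac{1}{N}\sum_{i} f(X_{i})\}$ as a function of the $N$ i.i.d.\ training points. Since $\|f\|_{\infty} \le \Delta$, replacing one sample perturbs the empirical average, and hence the supremum, by at most $2\Delta/N$; McDiarmid's bounded-difference inequality then yields $\Phi \le \mathbb E[\Phi] + \Delta\sqrt{2\log(1/\delta)/N}$ with probability at least $1-\delta$. After the factor $2$ carried in from the reduction, this produces exactly the $2\Delta\sqrt{2\log(1/\delta)/N}$ term of the bound.

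It remains to bound $\mathbb E[\Phi]$. I would apply the usual ghost-sample symmetrization to pass to the Rademacher complexity of $\mathcal F$ restricted to the sample $S$, and then invoke Dudley's entropy integral, $\mathfrak R(\mathcal F|_{S}) \lesssim \inf_{\alpha > 0}\big(\alpha/\sqrt N + \tfrac{1}{N}\int_{\alpha}^{\sqrt N}\sqrt{\log\mathcal N(\mathcal F|_{S},\varepsilon,\|\cdot\|_{2})}\,d\varepsilon\big)$. Substituting the covering bound of Theorem~\ref{thm:cover_bound} in its abbreviated form $\log\mathcal N \le R^{2}/\varepsilon^{2}$ makes the integrand $R/\varepsilon$, whose integral is the logarithm $R\log(\sqrt N/\alpha)$; choosing the truncation level $\alpha = 3R/\sqrt N$ collapses the two contributions into $\tfrac{R}{N}(1+\log\tfrac{N}{3R})$, and tracking the numerical constants through symmetrization and Dudley produces the stated $\tfrac{24R}{N}(1+\log\tfrac{N}{3R})$. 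Combining this expectation bound with the concentration step on the single event of probability at least $1-\delta$, and adding $\phi$, completes the proof.

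The routine parts are the reduction and the McDiarmid concentration; the delicate part is the chaining/Dudley step. The main obstacle is to verify that the covering number furnished by Theorem~\ref{thm:cover_bound} is exactly the object Dudley's integral consumes (same sample $S$, same $\|\cdot\|_{2}$ geometry, with $R$ absorbing $\|X\|_{2}$ and the spectral-norm product), to tame the divergence of $\int R/\varepsilon\,d\varepsilon$ at the origin via the truncation $\alpha$, and to calibrate $\alpha$ so that both the leading constant and the argument of the logarithm emerge as $24R$ and $N/(3R)$. I would also flag that evenness of $\mathcal F$ is used twice: once to make $d_{\mathcal F}$ symmetric, legitimizing both applications of the triangle inequality, and once to identify the Rademacher average with its absolute-value version during symmetrization.
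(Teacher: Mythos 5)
Your proposal follows the same skeleton as the paper's proof. The paper obtains the theorem by composing three black boxes: Zhang \textit{et al.}'s generalization bound for GANs (Lemma \ref{lemma:generalization_bound_gan} in the appendix), which already delivers $d_{\mathcal F}(\mu,\nu_{N})-\inf_{\nu\in\mathcal G} d_{\mathcal F}(\mu,\nu)\le 2\mathfrak R^{(\mu)}_{N}(\mathcal F)+2\Delta\sqrt{2\log(1/\delta)/N}+\phi$; Bartlett \textit{et al.}'s Dudley entropy-integral bound (Lemma \ref{covNumBound}); and the covering bound of Theorem \ref{thm:cover_bound} in the abbreviated form $\log\mathcal N\le R^{2}/\varepsilon^{2}$. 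Your Dudley step is identical to the paper's: integrand $R/\varepsilon$, truncated entropy integral, optimization over the cutoff, with optimum $\alpha=3R/\sqrt N$ --- which you state correctly, whereas the paper's appendix writes $\alpha=3\sqrt{R/n}$ (a typo that its final expression $\frac{12R}{N}\left(1+\log\frac{N}{3R}\right)$ silently corrects). The only substantive difference is that you re-derive Lemma \ref{lemma:generalization_bound_gan} inline: evenness plus the triangle inequality for the reduction to $2\,d_{\mathcal F}(\mu,\hat\mu_{N})+\phi$, and McDiarmid with bounded differences $2\Delta/N$ for the $\Delta\sqrt{2\log(1/\delta)/N}$ deviation; both of these steps are correct as you state them, and the reduction plus McDiarmid reproduces the $2\Delta\sqrt{2\log(1/\delta)/N}$ term exactly.

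The one concrete gap is in your constant tracking at the symmetrization step. The ``usual ghost-sample symmetrization'' you invoke gives $\mathbb E\left[d_{\mathcal F}(\mu,\hat\mu_{N})\right]\le 2\,\mathfrak R^{(\mu)}_{N}(\mathcal F)$, with a factor $2$ that evenness of $\mathcal F$ does not remove (evenness only identifies the one-sided supremum with its absolute-value version; the ghost sample still splits into two Rademacher averages). Combined with the factor $2$ from your reduction, your route yields $4\,\mathfrak R^{(\mu)}_{N}(\mathcal F)$ and hence $\frac{48R}{N}\left(1+\log\frac{N}{3R}\right)$, twice the stated bound. Landing exactly on $24R$ requires the factor-one estimate $d_{\mathcal F}(\mu,\hat\mu_{N})\le \mathfrak R^{(\mu)}_{N}(\mathcal F)+\Delta\sqrt{2\log(1/\delta)/N}$, which is precisely what the quoted Lemma \ref{lemma:generalization_bound_gan} packages; your sentence claiming that ``tracking the numerical constants through symmetrization and Dudley produces the stated $24R$'' asserts this without justification. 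The gap affects only the constant --- your argument still establishes the $\mathcal O(\log N/N)+\mathcal O(1/\sqrt N)$ rate and thus the theorem's qualitative content --- so the cleanest repair is to do what the paper does: cite Lemma \ref{lemma:generalization_bound_gan} for the reduction-plus-concentration as a single step, and keep your Dudley computation unchanged.
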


Two terms involve the training sample size $N$, $\mathcal O(\log N/N)$ and $\mathcal O(1/\sqrt{N})$. When the training sample size approaches infinity, both terms approach $0$ and the ratio between them approaches $0$. Therefore, this generalization bound is dominated by the second term and the generalization bound is $\mathcal O(1/\sqrt{N})$.

\section{Experiments}

\subsection{Datasets}
We use the Cityscapes \cite{cityscapes} as our target domain data. For the source domain, two challenging synthetic datasets, Synthia \cite{synthia} and GTA-5 \cite{gta} are utilized.

\textbf{Cityscapes} contains real-world vehicle-egocentric images collected from 50 cities in Germany and its surrounding countries. It provides three disjoint subsets: 2975 training images, 500 validation images, and 1525 test images. It also provides accurate pixel-level annotations for all images in 19 different categories. To ensure experimental fairness, we follow the same evaluation protocol as specified in the previous works \cite{pan2020unsupervised}. In the training phase, we use 2975 unlabeled training images, while in the test phase, 500 validation images are used for evaluation.

\textbf{GTA-5} contains 24,966 high-quality labeled frames from the realistic open-world computer games Grand Theft Auto V (GTA-5). Each frame is generated from the fictional city Los Santos, based on Los Angeles in Southern California, with annotations that are compatible with Cityscapes. We use all 19 official training classes for quantitative assessments.

\textbf{Synthia} is a large dataset of photo-realistic frames rendered from a virtual city with precise pixel-level annotations. Following previous works \cite{pan2020unsupervised}, we use the Synthia-Rand-Cityscapes subset, which contains 9400 images with annotations compatible with Cityscapes. The 16 common categories between Synthia and Cityscapes are selected for quantitative assessments.

\subsection{Implementation Details}
We employ DeepLab-v2 \cite{deeplab} with the ResNet-101 \cite{resnet} backbone model pre-trained on ImageNet \cite{imagenet} as the segmentation network for both the student network $F_s$ and the teacher network $F_t$. For the discriminator $D_{lab}$, we use an architecture similar to \cite{radford2015unsupervised} but utilize all fully-convolutional layers to retain the spatial information. The network consists of $5$ convolution layers with kernel $4\times4$ and stride of $2$, where the channel number is $\{64, 128, 256, 512, 1\}$, respectively. Each convolution layer is followed by a Leaky-ReLU \cite{maas2013rectifier} parameterized by $0.2$ except the last layer. The stochastic gradient descent (SGD) optimizer with a learning rate of $2.5e-5$ is utilized to train the student network $F_s$, while the discriminator $D_{lab}$ is optimized by the Adam optimizer \cite{kingma2014adam} with $\beta_1=0.9$, $\beta_2=0.99$. For both optimizers, we set a weight decay of $5e-5$ and adopt the ``poly'' learning rate decay policy, where the initial learning rate is multiplied by $\left(1-iter/maxiter\right)^{power}$ with $power=0.9$. The number of total training iterations $maxiter$ is set to $80000$. $\lambda_{con}$ and $\lambda_{adv}$ in (\ref{eq:4}) are empirically set to $3$ and $0.001$, respectively. The smoothing coefficient $\alpha$ in (\ref{eq:5}) is fixed to $0.999$.

All experiments in this paper were conducted on a computing cluster with a single NVIDIA Tesla V100 GPU.

\begin{table*}[t]
\caption{Quantitative results of semantic segmentation by domain adaptation from GTA-5 to CityScapes. Best results in each column are highlighted in \textbf{bold}. The results of 19 common object categories are reported as mIoU.}
\centering
\resizebox{\textwidth}{!}{
\begin{tabular}{c|ccccccccccccccccccc|c} %
\toprule
\multicolumn{21}{c}{GTA-5$\to$CityScapes}\\
\hline
Methods&\rotatebox{90}{road}&\rotatebox{90}{sidewalk}&\rotatebox{90}{building}&\rotatebox{90}{wall}&\rotatebox{90}{fence}&\rotatebox{90}{pole}&\rotatebox{90}{light}&\rotatebox{90}{sign}&\rotatebox{90}{vegetation}&\rotatebox{90}{terrain}&\rotatebox{90}{sky}&\rotatebox{90}{person}&\rotatebox{90}{rider}&\rotatebox{90}{car}&\rotatebox{90}{truck}&\rotatebox{90}{bus}&\rotatebox{90}{train}&\rotatebox{90}{motocycle}&\rotatebox{90}{bicycle}&\rotatebox{90}{mIoU}\\
\midrule
AdaptSeg \cite{adaptseg}& 86.5&36.0&79.9&23.4&23.3&23.9&35.2&14.8&83.4&33.3&75.6&58.5&27.6&73.7&32.5&35.4&3.9&30.1&28.1&42.4\\
DCAN \cite{wu2018dcan}& 88.5&37.4&79.3&24.8&16.5&21.3&26.3&17.4&80.8&30.9&77.6&50.2&19.2&77.7&21.6&27.1&2.7&14.3&18.1&38.5\\
CLAN \cite{luo2019taking}& 87.0&27.1&79.6&27.3&23.3&28.3&35.5&24.2&83.6&27.4&74.2&58.6&28.0&76.2&33.1&36.7&6.7&31.9&31.4&43.2\\
SWD \cite{lee2019sliced}&
92.0&46.4&82.4&24.8&24.0&\textbf{35.1}&33.4&34.2&83.6&30.4&80.9&56.9&21.9&82.0&24.4&28.7&6.1&25.0&33.6&44.5\\
ADVENT \cite{vu2019advent}& 89.4&33.1&81.0&26.6&26.8&27.2&33.5&24.7&83.9&36.7&78.8&58.7&30.5&84.8&38.5&44.5&1.7&31.6&32.4&45.5\\
DISE \cite{chang2019all}& 91.5&47.5&82.5&31.3&25.6&33.0&33.7&25.8&82.7&28.8&82.7&62.4&30.8&85.2&27.7&34.5&6.4&25.2&24.4&45.4\\
BDL \cite{li2019bidirectional}& 91.0&44.7&84.2&\textbf{34.6}&27.6&30.2&36.0&36.0&85.0&\textbf{43.6}&83.0&58.6&31.6&83.3&35.3&\textbf{49.7}&3.3&28.8&35.6&48.5\\
SIBAN \cite{luo2019significance}& 88.5&35.4&79.5&26.3&24.3&28.5&32.5&18.3&81.2&40.0&76.5&58.1&25.8&82.6&30.3&34.4&3.4&21.6&21.5&42.6\\
SSF-DAN \cite{du2019ssf}& 90.3&38.9&81.7&24.8&22.9&30.5&37.0&21.2&84.8&38.8&76.9&58.8&30.7&85.7&30.6&38.1&5.9&28.3&36.9&45.4\\
MaxSquare \cite{chen2019domain}& 89.4&43.0&82.1&30.5&21.3&30.3&34.7&24.0&85.3&39.4&78.2&\textbf{63.0}&22.9&84.6&36.4&43.0&5.5&\textbf{34.7}&33.5&46.4\\
AdaptPatch \cite{tsai2019domain}& \textbf{92.3}&\textbf{51.9}&82.1&29.2&25.1&24.5&33.8&33.0&82.4&32.8&82.2&58.6&27.2&84.3&33.4&46.3&2.2&29.5&32.3&46.5\\
TGCF-SE \cite{choi2019self}& 88.9&29.9&83.9&31.4&28.1&28.1&35.5&25.5&84.7&38.4&83.1&60.2&26.7&\textbf{86.6}&\textbf{47.9}&49.1&8.1&25.8&24.4&46.6\\
IntraDA \cite{pan2020unsupervised}& 90.6&37.1&82.6&30.1&19.1&29.5&32.4&20.6&85.7&40.5&79.7&58.7&31.1&86.3&31.5&48.3&0.0&30.2&35.8&46.3\\
LSE \cite{subhani2020learning}& 90.2&40.0&83.5&31.9&26.4&32.6&38.7&37.5&81.0&34.2&84.6&61.6&\textbf{33.4}&82.5&32.8&45.9&6.7&29.1&30.6&47.5\\
\hline
Ours (NoAdapt) & 76.0&21.2&75.6&19.4&22.1&21.9&32.3&19.2&78.6&21.7&64.8&53.6&26.1&48.1&29.5&37.0&0.1&24.6&34.2&37.2\\
Ours (SE-GAN) &91.0&35.2&\textbf{85.1}&32.1&\textbf{28.4}&33.1&\textbf{41.9}&\textbf{42.6}&\textbf{86.2}&42.4&\textbf{85.1}&60.1&28.8&86.1&42.0&48.1&\textbf{16.7}&22.6&\textbf{44.2}&\textbf{50.1}\\
\bottomrule
\end{tabular}
}
\label{tab:gta5}
\end{table*}

\begin{table*}[t!]
\caption{Quantitative results of semantic segmentation by domain adaptation from Synthia to CityScapes. Best results in each column are highlighted in \textbf{bold}. The results of 16 common object categories are reported as mIoU and the results of 13 common categories (excluding wall, fence and pole) are reported as mIoU*.}
\centering
\resizebox{\textwidth}{!}{
\begin{tabular}{c|cccccccccccccccc|cc} %
\toprule
\multicolumn{19}{c}{Synthia$\to$CityScapes}\\
\hline
Methods&\rotatebox{90}{road}&\rotatebox{90}{sidewalk}&\rotatebox{90}{building}&\rotatebox{90}{wall}&\rotatebox{90}{fence}&\rotatebox{90}{pole}&\rotatebox{90}{light}&\rotatebox{90}{sign}&\rotatebox{90}{vegetation}&\rotatebox{90}{sky}&\rotatebox{90}{person}&\rotatebox{90}{rider}&\rotatebox{90}{car}&\rotatebox{90}{bus}&\rotatebox{90}{motocycle}&\rotatebox{90}{bicycle}&\rotatebox{90}{mIoU}&\rotatebox{90}{mIoU*}\\
\midrule
AdaptSeg \cite{adaptseg} &84.3&42.7&77.5&-&-&-&4.7&7.0&77.9&82.5&54.3&21.0&72.3&32.2&18.9&32.3&-&46.7\\
DCAN \cite{wu2018dcan}& 81.5&33.4&72.4&7.9&0.2&20.0&8.6&10.5&71.0&68.7&51.5&18.7&75.3&22.7&12.8&28.1&36.5&42.7\\
CLAN \cite{luo2019taking} &81.3&37.0&80.1&-&-&-&16.1&13.7&78.2&81.5&53.4&21.2&73.0&32.9&22.6&30.7&-&47.8\\
SWD \cite{lee2019sliced} &82.4&33.2&82.5&-&-&-&22.6&19.7&83.7&78.8&44.0&17.9&75.4&30.2&14.4&39.9&-&48.1\\
ADVENT \cite{vu2019advent} &85.6&42.2&79.7&8.7&0.4&25.9&5.4&8.1&80.4&84.1&57.9&23.8&73.3&36.4&14.2&33.0&41.2&48.0\\
DISE \cite{chang2019all} &\textbf{91.7}&\textbf{53.5}&77.1&2.5&0.2&27.1&6.2&7.6&78.4&81.2&55.8&19.2&82.3&30.3&17.1&34.3&41.5&48.8\\
BDL \cite{li2019bidirectional}& 86.0&46.7&80.3&-&-&-&14.1&11.6&79.2&81.3&54.1&\textbf{27.9}&73.7&\textbf{42.2}&25.7&45.3&-&51.4\\
SIBAN \cite{luo2019significance} &82.5&24.0&79.4&-&-&-&16.5&12.7&79.2&82.8&58.3&18.0&79.3&25.3&17.6&25.9&-&46.3\\
SSF-DAN \cite{du2019ssf}& 84.6&41.7&80.8&-&-&-&11.5&14.7&80.8&85.3&57.5&21.6&82.0&36.0&19.3&34.5&-&50.0\\
MaxSquare \cite{chen2019domain}& 82.9&40.7&80.3&10.2&0.8&25.8&12.8&18.2&82.5&82.2&53.1&18.0&79.0&31.4&10.4&35.6&41.4&48.2\\
AdaptPatch \cite{tsai2019domain}& 82.4&38.0&78.6&8.7&0.6&26.0&3.9&11.1&75.5&84.6&53.5&21.6&71.4&32.6&19.3&31.7&40.0&46.5\\
TGCF-SE \cite{choi2019self}& 88.9&46.2&81.1&13.2&1.2&29.0&25.1&21.2&73.7&56.7&57.3&23.1&83.9&38.4&16.2&46.4&43.8&50.6\\
DADA \cite{vu2019dada}& 89.2&44.8&81.4&6.8&0.3&26.2&8.6&11.1&81.8&84.0&54.7&19.3&79.7&40.7&14.0&38.8&42.6&49.8\\
IntraDA \cite{pan2020unsupervised}& 84.3&37.7&79.5&5.3&0.4&24.9&9.2&8.4&80.0&84.1&57.2&23.0&78.0&38.1&20.3&36.5&41.7&48.9\\
LSE \cite{subhani2020learning}&
82.9&43.1&78.1&9.3&0.6&28.2&9.1&14.4&77.0&83.5&58.1&25.9&71.9&38.0&\textbf{29.4}&31.2&42.6&49.4\\
\hline
Ours (NoAdapt)& 56.7&20.6&74.6&5.9&0.1&22.1&7.9&9.9&74.6&79.9&53.3&15.9&49.5&22.9&13.2&24.8&33.3&38.8\\
Ours (SE-GAN) &90.0&46.3&\textbf{83.7}&\textbf{20.0}&\textbf{1.9}&\textbf{33.9}&\textbf{34.9}&\textbf{24.6}&\textbf{85.5}&\textbf{86.3}&\textbf{59.0}&19.7&\textbf{85.1}&41.4&20.6&\textbf{50.1}&\textbf{48.9}&\textbf{55.9}\\
\bottomrule
\end{tabular}
}
\label{tab:synthia}
\end{table*}

\begin{figure*}[t]
\centering
\includegraphics[width=0.95\linewidth]{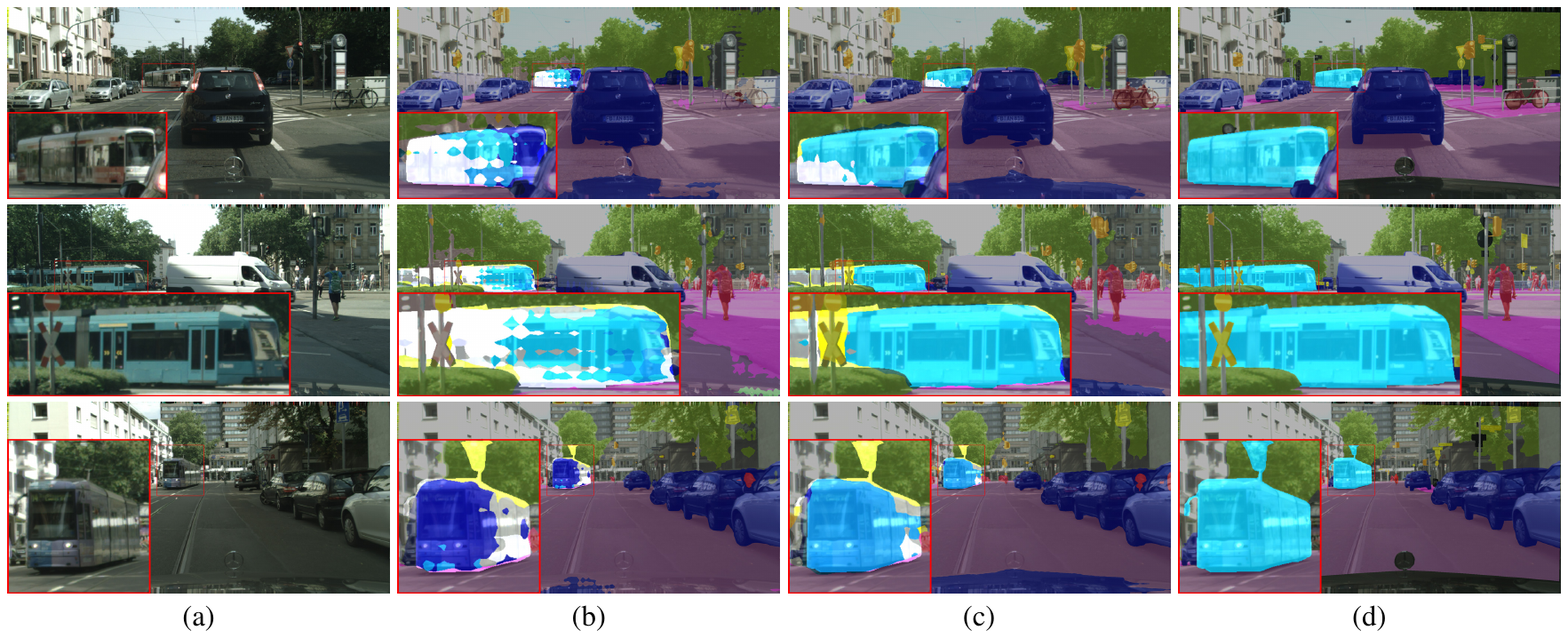}
\caption{Recognition results on the ``train'' category from GTA-5 to Cityscapes datasets. (a) Input images from Cityscapes dataset. (b) Adversarial Training. (c) SE-GAN. (d) Ground-truth annotations.}
\label{fig:trainmap}
\end{figure*}

\subsection{Experimental Results}

In this subsection, we evaluate the performance of the proposed framework against recent state-of-the-art methods. Note that all methods reported here adopt ResNet-101 as the backbone network to ensure fair comparisons. A brief introduction to these methods is given below.

\begin{itemize}
\item AdaptSeg \cite{adaptseg} adopts adversarial learning in the output space with a multi-level adversarial network.
\item DCAN \cite{wu2018dcan} performs channel-wise feature alignment to reduce domain shift at both pixel-level and feature-level.
\item CLAN \cite{luo2019taking} reduces the weight of the adversarial loss for category-level aligned features.
\item SWD \cite{lee2019sliced} utilizes the task-specific decision boundary and the Wasserstein metric for feature distribution alignment.
\item ADVENT \cite{vu2019advent} introduces an entropy-based adversarial training approach targeting both the entropy minimization objective and the structure adaptation.
\item DISE \cite{chang2019all} disentangles images into domain-invariant structure and domain-specific texture representations.
\item BDL \cite{li2019bidirectional} learns the image translation and the segmentation model alternatively by bidirectional learning.
\item SIBAN \cite{luo2019significance} enables a significance-aware feature purification before the adversarial adaptation.
\item SSF-DAN \cite{du2019ssf} designs a semantic-wise separable discriminator to independently adapt semantic features across the target and source domains.
\item MaxSquare \cite{chen2019domain} balances the gradient of well-classified target samples to prevent the training from being dominated by easy-to-transfer samples in the target domain.
\item AdaptPatch \cite{tsai2019domain} learns discriminative feature representations of patches in the source domain by discovering multiple modes of patch-wise output distribution through the construction of a clustered space.
\item TGCF-SE \cite{choi2019self} adopts a style transfer network for data augmentation in self-ensembling model.
\item DADA \cite{vu2019dada} conducts domain adaptation with the knowledge of dense depth in the source domain.
\item IntraDA \cite{pan2020unsupervised} reduces the inter-domain and intra-domain gap together by the self-training technique.
\item LSE \cite{subhani2020learning} exploits the scale-invariance property of the semantic segmentation model for self-supervised domain adaptation.
\end{itemize}

The quantitative results are presented in Table~\ref{tab:gta5} and Table~\ref{tab:synthia}. We adopt the intersection over union (IoU) per category and mean IoU (mIoU) over all the categories as the performance metrics. ``NoAdapt'' represents the baseline segmentation models directly trained on the labeled source-domain data without domain adaptation. It can be seen from Table~\ref{tab:gta5} and Table~\ref{tab:synthia} that the mIoUs of SE-GAN are about $13\%$ and $16\%$ higher than those of the NoAdapt baseline for GTA-5 to Cityscapes and Synthia to Cityscapes scenarios, respectively. Further, the proposed framework significantly outperforms recent state-of-the-art methods in both domain adaptation scenarios in most categories. These results verify the effectiveness of the proposed method for reducing the domain gaps between source and target domains. Some qualitative semantic segmentation results from GTA-5 to Cityscapes are presented in Fig. \ref{fig:segablation}.

We also find that the proposed SE-GAN shows superiority in addressing those ``hard examples''. Here, we take the ``train'' category for instance. Since the Cityscapes ``trains'' are more
visually similar to the GTA-5 ``buses'' instead of the GTA-5 ``trains'' \cite{zhang2019curriculum}, most methods yield relatively poor performance in this category (less than $10\%$ in the IoU metric as shown in Table \ref{tab:gta5}). By contrast, SE-GAN could achieve an IoU of about $16.7\%$ on ``train'', which outperforms the NoAdapt baseline by more than $16\%$. Some recognition results on the ``train'' category from GTA-5 to Cityscapes are presented in Fig. \ref{fig:trainmap}.



\subsection{Ablation Study}
We further evaluate how each module in SE-GAN influences domain adaptation performance. The quantitative ablation study results are demonstrated in Table \ref{tab:ablation}. Here, ``AT'' denotes the adversarial training, ``SE'' denotes the self-ensembling learning, ``Aug'' denotes data augmentation with style-transferred source-domain images, ``ST'' denotes the self-training with pseudo labels, and ``MST'' represents multi-scale testing. In both datasets, directly applying AT only leads to relatively low mIoUs, while combining both SE and Aug techniques can significantly improve the performance. ST also plays an important role in SE-GAN, especially for the Synthia dataset. Finally, with the help of MST strategy, the mIoU further increases, achieving state-of-the-art performance.

\begin{table}[t]
\caption{Performance contribution of each module in SE-GAN (reported in mIoU). Best results are highlighted in \textbf{bold}.}
\centering
\resizebox{\linewidth}{!}{
\begin{tabular}{cccccc|cc} %
\toprule
\textbf{Method}&\textbf{AT}&\textbf{SE}&\textbf{Aug}&\textbf{ST}&\textbf{MST}&\textbf{GTA-5}&\textbf{Synthia}\\
\midrule
\textit{Baseline}&&&&&&37.2&33.3\\
$+$AT&$\checkmark$&&&&&43.1&41.0\\
$+$SE&$\checkmark$&$\checkmark$&&&&46.0&42.5\\
$+$Aug&$\checkmark$&$\checkmark$&$\checkmark$&&&48.5&45.0\\
$+$ST&$\checkmark$&$\checkmark$&$\checkmark$&$\checkmark$&&49.5&47.4\\
$+$MST&$\checkmark$&$\checkmark$&$\checkmark$&$\checkmark$&$\checkmark$&\textbf{50.1}&\textbf{48.9}\\
\bottomrule
\end{tabular}
}
\label{tab:ablation}
\end{table}


One of the benefits of the proposed SE-GAN is the strong stability during training. As shown in Fig. \ref{fig:training_plot}, both AT and AT+Aug can hardly obtain a stable mIoU in the training phase (blue and green curves), since the instability of training is a common barrier shared by most adversarial training methods. By contrast, using the self-ensembling model as the generator can dramatically stabilize adversarial training (red and purple curves).

We further make a detailed comparison of per-class IoU gains between adversarial training and the proposed SE-GAN. As shown in Fig. \ref{fig:gain}, there exist negative transfers on the fence, vegetation, terrain, truck, and motorcycle in the adversarial training method (blue bars). By contrast, the proposed SE-GAN significantly mitigates this phenomenon (red bars). The negative transfer only happens in the motorcycle class for SE-GAN, while in the other remaining four challenging classes, SE-GAN achieves remarkable IoU gains at least over $5\%$. These results are in accord with our intuition that ensemble learning could stabilize adversarial training and reduce the risk of negative transfer.

\begin{figure}
\centering
\includegraphics[width=\linewidth]{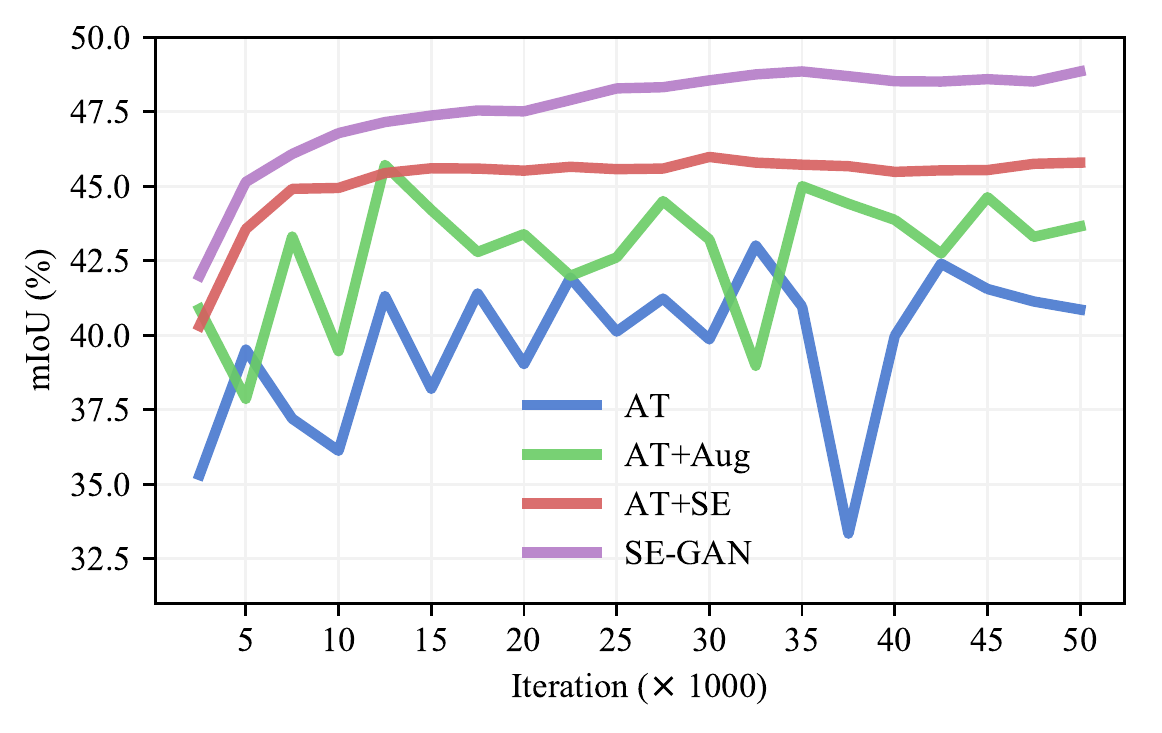}
\caption{The testing mIoUs of different adversarial training strategies at different iterations on the GTA-5 to Cityscapes scenario.}
\label{fig:training_plot}
\end{figure}

\begin{table}
\centering
\caption{Parameter analysis of the weighting factor $\lambda_{adv}$ in SE-GAN (without self-training). Best results are highlighted in \textbf{bold}.}
\begin{tabular}{cccccc}
\toprule
\multicolumn{6}{c}{GTA-5$\to$Cityscapes}\\
\hline
$\lambda_{adv}$&$1e-4$&$5e-4$&$1e-3$&$5e-3$&$1e-2$\\
\midrule
mIoU $\left(\%\right)$&47.8&48.2&\textbf{48.5}&48.1&47.5\\
\bottomrule
\end{tabular}
\label{tab:adv}
\end{table}

\subsection{Parameter Analysis}
In this subsection, we analyze how different values of the parameters in SE-GAN would influence the domain adaptation performance.

\textit{The adversarial weighting factor $\lambda_{adv}$ in (\ref{eq:4})}. Table \ref{tab:adv} shows that a smaller $\lambda_{adv}$ may not facilitate the adversarial training process too much, while a larger $\lambda_{adv}$ may bring about incorrect domain knowledge to the network. We empirically choose $\lambda_{adv}$ as $1e-3$ in the experiments.

\begin{figure}
\centering
\includegraphics[width=\linewidth]{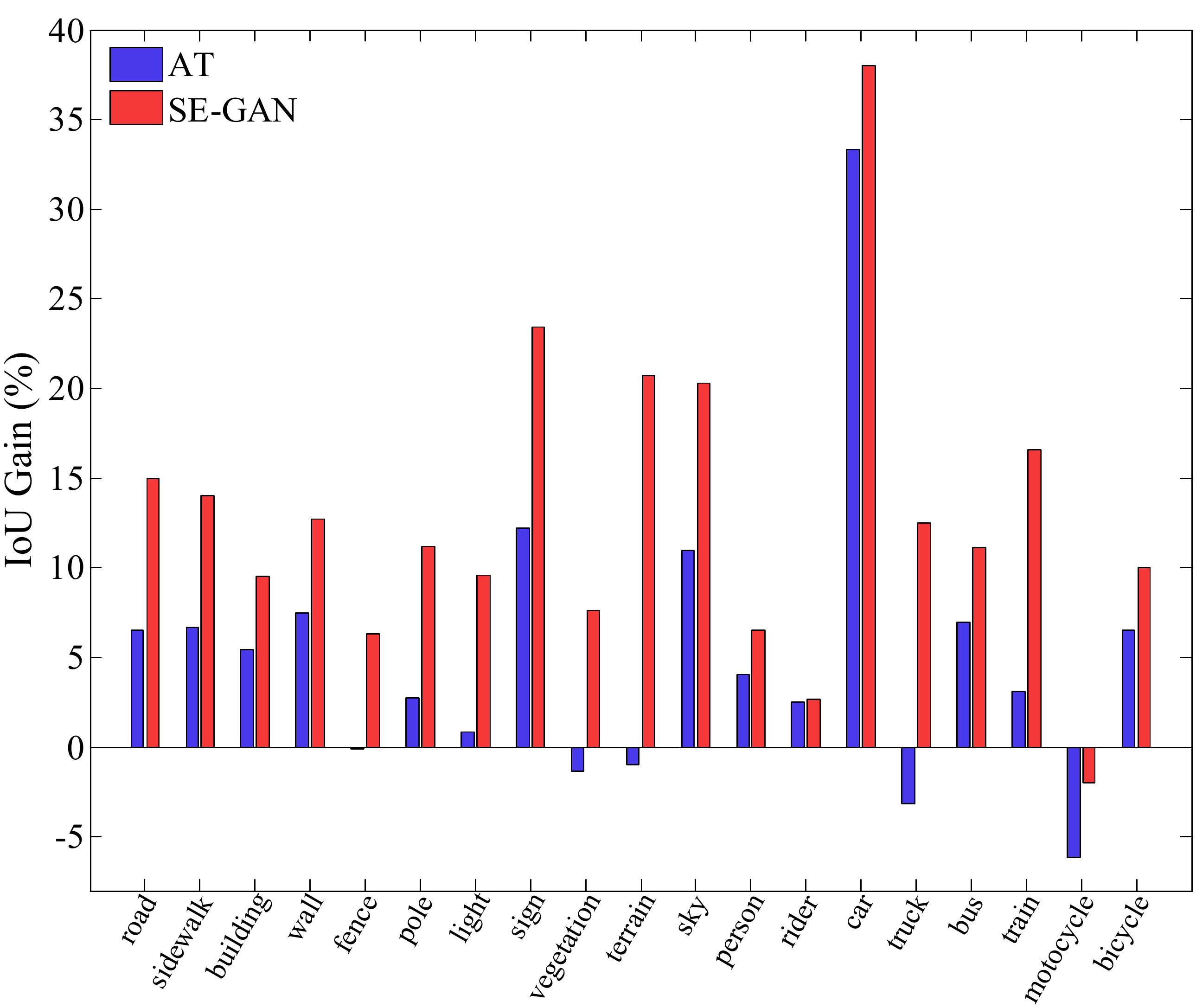}
\caption{Per-class IoU gains by AT and SE-GAN on the GTA-5 to Cityscapes scenario.}
\label{fig:gain}
\end{figure}

\textit{The self-ensembling weighting factor $\lambda_{con}$ in (\ref{eq:4})}. As shown in Table \ref{tab:con}, a too large $\lambda_{con}$ (i.e., $\lambda_{con}=10$) may bring about a too strong regularization for the training of student network, which is detrimental to the domain adaptation. When $\lambda_{con}=3$, SE-GAN can achieve the best mIoU accuracy.

\textit{The smoothing coefficient $\alpha$ in (\ref{eq:5})}. Table \ref{tab:alpha} shows that the ensemble model generated with a small value of $\alpha$ may not be strong enough. Generally, a larger $\alpha$ helps to improve ensemble learning. However, when $\alpha$ reaches $0.9999$, the performance drops significantly, which indicates the teacher network would get stuck in those incorrect historical models in this case. We empirically choose $\alpha$ as $0.999$ in the experiments.

\section{Conclusion and Future Direction}
This paper proposes a self-ensembling generative adversarial network (SE-GAN) for cross-domain semantic segmentation. In SE-GAN, a teacher network and a student network form a self-ensembling model for generating semantic segmentation maps, which together with a discriminator, form a GAN. While most of the existing adversarial training-based methods commonly suffer from the barrier of instability and fragility in the training phase, we find SE-GAN can significantly boost the performance of adversarial training and enhance the stability of the model. We evaluate the proposed method from both theoretical and empirical perspectives. First, we theoretically analyze the proposed method and provide an $\mathcal O\left(1/\sqrt{N}\right)$ generalization bound for SE-GAN. Then, extensive and systematic experiments verify that the proposed method can significantly stabilize adversarial training and outperform existing state-of-the-art approaches. Considering that the collection of high-quality pixel-level annotations is very time-consuming, SE-GAN has the potential to save a large amount of labeling effort.

SE-GAN entangles two promising yet distinct schemes, adversarial training and self-ensembling. It inherits advantages from both regimes and addresses each other's major critical shortcomings. Specifically, adversarial training enables self-ensembling networks to tackle domain shifts by employing well-designed loss functions, while the self-ensembling mechanism, in turn, enhances the stability of adversarial training. Most techniques that rely on deep neural networks can benefit from the exploitation of cross-domain data. These cross-domain methods ought to share similar technical barriers with cross-domain semantic segmentation. Therefore, {\it SE-GAN may be of independent interest in other cross-domain tasks.} We propose to explore it in future works.

\begin{table}
\centering
\caption{Parameter analysis of the weighting factor $\lambda_{con}$ in SE-GAN (without self-training). Best results are highlighted in \textbf{bold}.}
\begin{tabular}{cccccc}
\toprule
\multicolumn{6}{c}{GTA-5$\to$Cityscapes}\\
\hline
$\lambda_{con}$&0.1&0.3&1&3&10\\
\midrule
mIoU $\left(\%\right)$&47.3&47.9&48.1&\textbf{48.5}&46.1\\
\bottomrule
\end{tabular}
\label{tab:con}
\end{table}

\begin{table}
\centering
\caption{Parameter analysis of the smoothing coefficient $\alpha$ in SE-GAN (without self-training). Best results are highlighted in \textbf{bold}.}
\begin{tabular}{cccccc}
\toprule
\multicolumn{6}{c}{GTA-5$\to$Cityscapes}\\
\hline
$\alpha$&0.9&0.99&0.995&0.999&0.9999\\
\midrule
mIoU $\left(\%\right)$&47.1&47.3&47.9&\textbf{48.5}&44.6\\
\bottomrule
\end{tabular}
\label{tab:alpha}
\end{table}

\appendices
\section{}
This appendix first presents all the proofs omitted from the main text. Then, the style transfer model used in this study is described in detail.
\subsection{Proof of Theorem \ref{thm:cover_bound}}

This section provides detailed proof for Theorem \ref{thm:cover_bound}. We first recall two lemmas by Bartlett \textit{et al.} \cite{bartlett2017spectrally}. 

\begin{lemma}[cf. \cite{bartlett2017spectrally}, Lemma A.7]
\label{coverChainBoundGeneral}
Suppose there are $L$ weight matrices in a chain-like neural network. Let $(\varepsilon_{1}, \ldots, \varepsilon_{L})$ be given. Suppose the $L$ weight matrices $(A_{1}, \ldots, A_{L})$ lies in $\mathcal B_{1} \times \ldots \times \mathcal B_{L}$, where $\mathcal B_{i}$ is a ball centered at $0$ with the radius of $s_{i}$, i.e., $\mathcal B_{i} = \{A_{i}: \| A_{i} \| \le s_{i}\}$. Furthermore, suppose the input data matrix $X$ is restricted in a ball centred at $0$ with the radius of $B$, i.e., $\| X \| \le B$. Suppose $F$ is a hypothesis function computed by the neural network. If we define:
\begin{equation}
    \mathcal H = \{ F(X): A_{i} \in \mathcal B_{i}, A^{u,v,s}_{t} \in \mathcal B^{u,v,s}_{t} \} ~,
\end{equation}
where $i = 1, \ldots, L$, $(u,v,s)\in I_{V}$, and $t \in \{1, \ldots, L^{u,v,s}\}$. Let $\varepsilon = \sum_{j = 1}^{L}\varepsilon_{j}\rho_{j}\prod_{l = j+1}^{L}\rho_{l}s_{l}$. Then we have the following inequality:
\begin{align}
\label{formulaCoverChainBoundGeneral}
    \mathcal N(\mathcal H) \le \prod_{i=1}^{L} \sup_{\mathbf A_{i-1} \in \bm{\mathcal B}_{i-1}} \mathcal N_{i} ~,
\end{align}
where $\mathbf A_{i-1} = (A_{1}, \ldots, A_{i-1})$, $\bm{\mathcal B}_{i-1} = \mathcal B_{1} \times \ldots \times \mathcal B_{i-1}$, and
\begin{equation}
    \mathcal N_{i}  = \mathcal N \left( \left\{ A_{i}F_{\mathbf A_{i-1}}(X): A_{i} \in \mathcal B_{i} \right\} \varepsilon_{i}, \| \cdot \| \right) ~.
\end{equation}
Here, the radius of each covers are respectively,
\begin{equation}
\label{eq:radius}
    \varepsilon_i = \frac{\alpha_i \varepsilon}{\rho_i\prod_{j>i}\rho_js_j},
\end{equation}
where
\begin{gather}
\label{eq:radius_par1}
    \alpha_i=\frac{1}{\bar\alpha} \left(\frac{b_i}{s_i} \right)^{2/3},\\
\label{eq:radius_par2}
    \bar \alpha = \sum_{j=1}^L \left(\frac{b_j}{s_j}\right)^{2/3}.
\end{gather}
\end{lemma}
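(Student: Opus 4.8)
The plan is to prove this chain bound by an induction over the $L$ layers that simultaneously tracks the cardinality of a composite cover and the accumulated approximation error at the output. First I would fix the input $X$, which is common to every hypothesis, and build a finite set of approximate layer outputs one layer at a time. Starting from the exact singleton cover of the input, suppose that after $i-1$ layers I have a cover $\mathcal C_{i-1}$ of the partial outputs $F_{i-1}(X)$ at resolution $\delta_{i-1}$ with $|\mathcal C_{i-1}| \le \prod_{k<i}\sup_{\mathbf A_{k-1}}\mathcal N_k$. To extend to layer $i$, for each center $\hat Z \in \mathcal C_{i-1}$ I would cover the linear images $\{A_i \hat Z : A_i \in \mathcal B_i\}$ at resolution $\varepsilon_i$ — this is exactly the quantity $\mathcal N_i$, and taking the supremum over $\mathbf A_{i-1}$ makes one cover valid for every admissible input — and then apply $\sigma_i$ to each resulting center.

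The second step is the error recursion. For a fixed weight tuple, choosing the nearest center $\hat Z$ at each stage and decomposing $A_i F_{i-1}(X) - \hat P = A_i (F_{i-1}(X) - \hat Z) + (A_i \hat Z - \hat P)$, the triangle inequality together with $\|A_i\|_\sigma \le s_i$ and the $\varepsilon_i$-cover of the weight action gives $\|A_i F_{i-1}(X) - \hat P\| \le s_i \delta_{i-1} + \varepsilon_i$. Applying the $\rho_i$-Lipschitz nonlinearity $\sigma_i$ then yields the recursion $\delta_i \le \rho_i s_i \delta_{i-1} + \rho_i \varepsilon_i$. Unrolling from $\delta_0 = 0$ telescopes to the stated total resolution $\varepsilon = \sum_{j=1}^L \rho_j \varepsilon_j \prod_{l>j}\rho_l s_l$, while the cardinality multiplies at each layer to give $\mathcal N(\mathcal H) \le \prod_i \sup_{\mathbf A_{i-1}}\mathcal N_i$, which is precisely \eqref{formulaCoverChainBoundGeneral}.

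The third step is to justify the particular radius allocation \eqref{eq:radius}. Writing $\varepsilon_i = \alpha_i \varepsilon / (\rho_i \prod_{j>i}\rho_j s_j)$ converts the resolution constraint into the simplex condition $\sum_i \alpha_i = 1$, since each term $\rho_i \varepsilon_i \prod_{l>i}\rho_l s_l$ collapses to $\alpha_i \varepsilon$. Anticipating the companion per-layer (Maurey-type) bound, which contributes $\log \mathcal N_i$ proportional to $(b_i/s_i)^2 \alpha_i^{-2}$ once the spectral products telescope, I would choose the $\alpha_i$ to minimize $\sum_i (b_i/s_i)^2 \alpha_i^{-2}$ over the simplex; a Lagrange-multiplier (equivalently Hölder) argument forces $\alpha_i \propto (b_i/s_i)^{2/3}$, and normalizing by $\bar\alpha = \sum_j (b_j/s_j)^{2/3}$ yields \eqref{eq:radius_par1}--\eqref{eq:radius_par2} with minimized value proportional to $\bar\alpha^3$.

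I expect the main obstacle to be the uniformity of the per-layer cover: the cover of the weight action at layer $i$ must be built so that a single finite set of centers suffices for \emph{every} input $\hat Z$ that can arise from the preceding layers, which is exactly what the $\sup_{\mathbf A_{i-1}}$ in \eqref{formulaCoverChainBoundGeneral} encodes. Making this rigorous requires care in the order of quantifiers — covering weights uniformly over a continuum of possible inputs rather than for one fixed input — and in checking that the two error sources, the propagated input error $s_i \delta_{i-1}$ and the fresh weight-approximation error $\varepsilon_i$, combine additively under the triangle inequality before the Lipschitz nonlinearity is applied. The remaining telescoping of the spectral norms and the simplex optimization are then routine.
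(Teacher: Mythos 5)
Your proposal is correct, but note that the paper itself does not prove this statement: it is recalled verbatim from \cite{bartlett2017spectrally} (Lemma A.7) as an imported result, and your argument --- the layer-by-layer inductive cover construction with cardinalities multiplying against the $\sup_{\mathbf A_{i-1}}$, the error recursion $\delta_i \le \rho_i\left(s_i \delta_{i-1} + \varepsilon_i\right)$ telescoping from $\delta_0 = 0$ to the stated total resolution, and the Lagrange/H\"older allocation $\alpha_i \propto (b_i/s_i)^{2/3}$ minimizing $\sum_i (b_i/s_i)^2\alpha_i^{-2}$ over the simplex with optimal value $\bar\alpha^3$ --- is exactly the proof given in that reference (the radius allocation appearing there in the proof of the main theorem rather than inside Lemma A.7 itself). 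So your reconstruction is correct and takes essentially the same route as the source proof on which the paper relies, including your closing caveat about the quantifier order in the per-layer covers, which matches the known informality in the original statement.
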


\begin{lemma}[cf. \cite{bartlett2017spectrally}, Lemma 3.2]
\label{matrixCover}
    Let conjugate exponents $(p, q)$ and $(r, s)$ be given with $p \le 2$, as well as positive reals $(a, b, \varepsilon)$ and positive integer $m$. Let matrix $X \in \mathbb R^{n \times d}$ be given with $\| X \|_{p} \le b$. Let $\mathcal H_{A}$ denote the family of matrices obtained by evaluating $X$ with all choices of matrix $A$:
    \begin{equation}
        \mathcal H_{A} \triangleq \left\{ XA | A \in \mathbb R^{d \times m}, \|A\|_{q, s} \le a \right\} ~.
    \end{equation}
    Then
    \begin{equation}
    \label{matrixCovBound}
        \log \mathcal N \left( \mathcal H_{A}, \varepsilon, \| \cdot \|_{2} \right) \le \ceil*{\frac{a^{2}b^{2}m^{2/r}}{\varepsilon^{2}}} \log(2dm).
    \end{equation}
\end{lemma}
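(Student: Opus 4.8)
\emph{Plan.} The statement is the matrix covering bound of Bartlett \textit{et al.} \cite{bartlett2017spectrally}, and I would prove it by Maurey's empirical (sparsification) method combined with a counting argument. The idea is to approximate every output matrix $XA$ by an average of a small number $k$ of simple ``atoms'' drawn from a fixed finite set, to show that $k=\lceil a^{2}b^{2}m^{2/r}/\varepsilon^{2}\rceil$ atoms already push the Frobenius error below $\varepsilon$, and then to bound the covering number by the number of such $k$-fold averages.

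First I would decompose the output. Writing $X_{:,i}$ for the $i$-th column of $X$ and $e_{\ell}$ for the $\ell$-th standard basis vector of $\mathbb R^{m}$, one has the atomic expansion
\begin{equation}
XA=\sum_{i=1}^{d}\sum_{\ell=1}^{m}A_{i\ell}\,X_{:,i}e_{\ell}^{\top},
\end{equation}
a signed combination of the rank-one matrices $Z_{i\ell}=X_{:,i}e_{\ell}^{\top}$, each with $\|Z_{i\ell}\|_{2}=\|X_{:,i}\|_{2}$. The quantity controlling Maurey's rate is $S=\sum_{i,\ell}|A_{i\ell}|\,\|X_{:,i}\|_{2}$, and the key estimate is $S\le ab\,m^{1/r}$. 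I would obtain it by two H\"older steps: applying H\"older with the conjugate pair $(p,q)$ along the inner index $i$ gives $\sum_{i}|A_{i\ell}|\,\|X_{:,i}\|_{2}\le\|A_{:,\ell}\|_{q}\,\big\|(\|X_{:,i}\|_{2})_{i}\big\|_{p}\le b\,\|A_{:,\ell}\|_{q}$ by the hypothesis $\|X\|_{p}\le b$; then summing over the $m$ columns and invoking the power-mean inequality $\|v\|_{1}\le m^{1/r}\|v\|_{s}$ (which uses $1/r+1/s=1$) yields $S\le b\,m^{1/r}\|A\|_{q,s}\le ab\,m^{1/r}$.

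Next I would run the sparsification. Set $\bar S=ab\,m^{1/r}$ and define a distribution on the atom set $\{0\}\cup\{\pm\bar S\,X_{:,i}\|X_{:,i}\|_{2}^{-1}e_{\ell}^{\top}\}$ placing mass $|A_{i\ell}|\|X_{:,i}\|_{2}/\bar S$ on the signed atom indexed by $(i,\ell)$ and the remaining mass $1-S/\bar S\ge 0$ on $0$; by construction its mean equals $XA$ and every atom $W$ satisfies $\|W\|_{2}\le\bar S$. Drawing $W_{1},\dots,W_{k}$ i.i.d.\ from this distribution, the estimator $\hat V=\tfrac1k\sum_{t}W_{t}$ is unbiased and
\begin{equation}
\mathbb E\,\big\|\hat V-XA\big\|_{2}^{2}=\tfrac1k\,\mathbb E\,\|W-XA\|_{2}^{2}\le\tfrac1k\,\mathbb E\,\|W\|_{2}^{2}\le\frac{\bar S^{2}}{k}\le\frac{a^{2}b^{2}m^{2/r}}{k},
\end{equation}
so choosing $k=\lceil a^{2}b^{2}m^{2/r}/\varepsilon^{2}\rceil$ forces the right-hand side below $\varepsilon^{2}$, and some realization achieves $\|\hat V-XA\|_{2}\le\varepsilon$. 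Finally, every such $\hat V$ is an average of $k$ atoms from a fixed set of at most $2dm$ $(\text{sign},i,\ell)$ choices, so the number of admissible $\hat V$ is at most $(2dm)^{k}$; taking logarithms gives $\log\mathcal N(\mathcal H_{A},\varepsilon,\|\cdot\|_{2})\le k\log(2dm)=\lceil a^{2}b^{2}m^{2/r}/\varepsilon^{2}\rceil\log(2dm)$, which is \eqref{matrixCovBound}.

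\emph{Main obstacle.} I expect the delicate part to be the norm bookkeeping rather than the probabilistic core. One must fix the right Hilbert ($\ell_{2}$, Frobenius) structure on the outputs so that Maurey's $1/\sqrt k$ rate applies, which is exactly why the column norms $\|X_{:,i}\|_{2}$ and the hypothesis $p\le 2$ enter, and then chain the two H\"older inequalities across the two matrix dimensions to land precisely on $a^{2}b^{2}m^{2/r}$. A secondary subtlety is keeping the atom set finite and independent of $A$ while preserving the correct mean; the zero-atom padding with the fixed scale $\bar S$ resolves this at the cost of only an innocuous constant inside the logarithm.
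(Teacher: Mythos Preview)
The paper does not prove this lemma; it is quoted verbatim as Lemma~3.2 of Bartlett \textit{et al.}~\cite{bartlett2017spectrally} and used as a black box in the proof of Theorem~\ref{thm:cover_bound}. There is therefore no in-paper argument to compare against.

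Your proposal reproduces the Maurey sparsification proof that Bartlett \textit{et al.}\ themselves give, and the outline is sound: the atomic expansion of $XA$, the two H\"older steps yielding $S\le ab\,m^{1/r}$ (with $p\le2$ used to pass from column $\ell_2$-norms to the entrywise $\ell_p$-norm of $X$), the unbiased sparsified estimator with variance $\bar S^2/k$, and the counting of $k$-tuples of atoms. One bookkeeping nit: your atom set as written contains $2dm$ signed rank-one matrices \emph{plus} the zero atom, so the crude count is $(2dm+1)^k$ rather than $(2dm)^k$; this is harmless for the stated bound (and in the original the zero atom is typically absorbed or the discrepancy ignored), but you should say how you dispose of it. Otherwise the plan is correct and matches the source proof.
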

This covering bound constrains the hypothesis complexity contributed by a single weight matrix.

\begin{proof}[Proof of Theorem \ref{thm:cover_bound}]

Suppose the hypothesis spaces of the output functions $F_{(A_{1}, \ldots, A_{i-1})}$ of the weight matrices $A_{i}, i = 1, \ldots, 5$ are respectively $\mathcal H_{i}, i = 1, \ldots, 5$. From Lemma \ref{coverChainBoundGeneral}, we can directly get the following inequality,
\begin{align}
    & \log \mathcal N(\mathcal F|S) \nonumber\\
    \le & \log \left( \prod_{i=1}^{5} \sup_{\mathbf A_{i-1} \in \bm{\mathcal B}_{i-1}} \mathcal N_{i} \right) \nonumber\\
    \le & \sum_{i = 1}^{5} \log \left( \sup_{\substack{(A_{1}, \ldots, A_{i-1}) \\ \forall j < i, A_{j} \in B_{j}}} \mathcal N \left( \left\{ A_{i} F_{(A_{1}, \ldots, A_{i-1})} \right\}, \varepsilon_{i}, \| \cdot \|_{2} \right) \right) ~.
\end{align}
Employ eq. (\ref{matrixCovBound}), we can get the following inequality,
\begin{align}
    \log \mathcal N(\mathcal F|S) \le \sum_{i = 1}^{5} \frac{b_{i}^{2} \| F_{(A_{1}, \ldots, A_{i-1})}(X) \|^{2}_{\sigma}}{\varepsilon_{i}^{2}} \log \left( 2W^{2} \right) ~. 
\end{align}
Meanwhile,
\begin{align}
    &\| F_{(A_{1}, \ldots, A_{i-1})}(X) \|^{2}_{\sigma} \\
=& \| \sigma_{i-1} (A_{i-1} F_{(A_{1}, \ldots, A_{i-2})}(X)) - \sigma_{i-1}(0) \|_2 \nonumber\\
    \le & \| \sigma_{i-1} \| \| A_{i-1} F_{(A_{1}, \ldots, A_{i-2})}(X) - 0 \|_2 \nonumber\\
    \le & \rho_{i-1} \| A_{i-1} \|_{\sigma} \| F_{(A_{1}, \ldots, A_{i-2})}(X) \|_2 \nonumber\\
    \le & \rho_{i-1} s_{i-1} \| F_{(A_{1}, \ldots, A_{i-2})}(X) \|_2.
\end{align}
Therefore,
\begin{equation}
    \| F_{(A_{1}, \ldots, A_{i-1})}(X) \|^{2}_{\sigma} \le \|X\|^{2} \prod_{j = 1}^{i-1} s_{i}^{2} \rho_{i}^{2}.
\end{equation}

Suppose the covering number radius of the final output hypothesis space is $\varepsilon$, then we can get the formulation of each covering number radius $\varepsilon_{i}$ throughout the neural network in terms of the radius $\varepsilon$. Specifically, motivated by the proof given in \cite{bartlett2017spectrally}, we can get the following equations:
\begin{gather}
    \varepsilon_{i+1} = \rho_{i} s_{i+1} \varepsilon_{i} ~.
\end{gather}
Then,
\begin{gather}
    \varepsilon_{5} = \rho_{1} \prod_{i = 2}^{4} s_{i}\rho_{i} s_{5} \epsilon_{1} ~,\\
    \varepsilon = \rho_{1} \prod_{i = 2}^{5} s_{i}\rho_{i} \epsilon_{1} ~.
\end{gather}
Therefore,
\begin{equation}
    \varepsilon_{i} = \frac{\rho_{i}\prod_{j = 1}^{i-1} s_{j}\rho_{j}}{\prod_{j = 1}^{5} s_{j}\rho_{j}} \varepsilon ~.
\end{equation}
Therefore,
\begin{align}
    &\log \mathcal N \left(\mathcal F|_{S}, \varepsilon, \| \cdot \|_{2} \right) \\
    &\le \frac{\log\left( 2W^{2} \right) \| X \|_{2}^{2}}{\varepsilon^{2}} \left( \prod_{i = 1}^{5} s_{i} \rho_{i} \right)^{2} \sum_{i = 1}^{5} \frac{b_{i}^{2}}{s_{i}^{2}} ~,
\end{align}
which is exactly eq. (\ref{eq:cover_bound}) of Theorem \ref{thm:cover_bound}.

The proof is completed.
\end{proof}

\begin{figure*}
\centering
\includegraphics[width=0.85\linewidth]{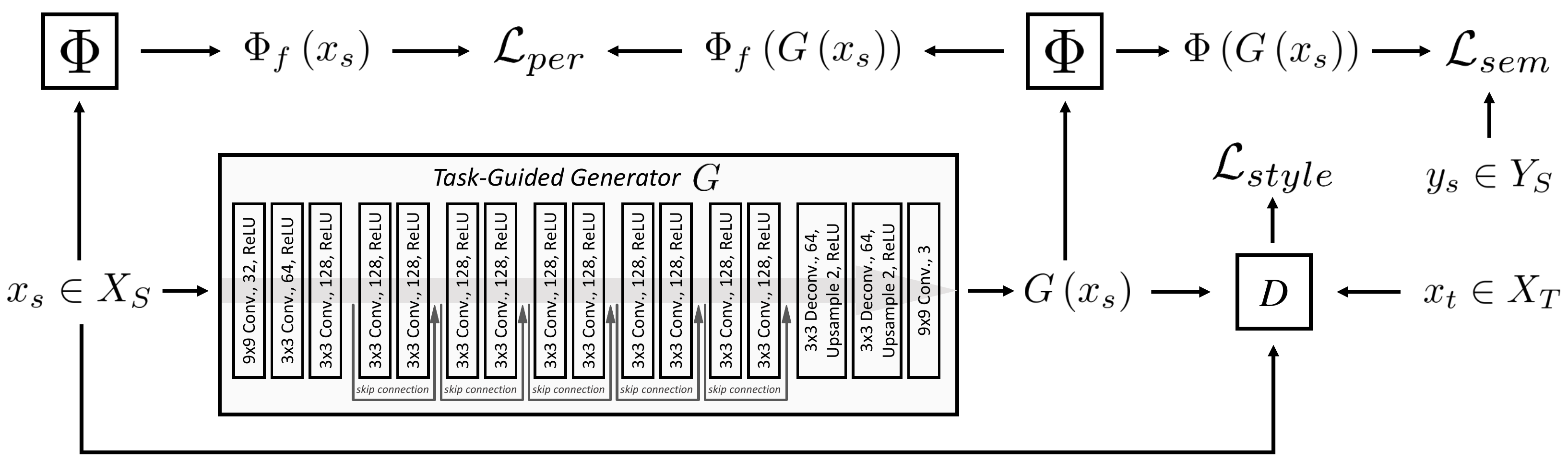}
\caption{An illustration of the proposed task-guided style transfer network (TGSTN).}
\label{fig:tgstn}
\end{figure*}

\subsection{Proof of Theorem \ref{thm:generalization_bound}}
\label{sec:generalization_bound}

This section provides a detailed proof for Theorem \ref{thm:generalization_bound}. We first recall a recent lemma generally addressing the generalization ability of GAN and a classic lemma in statistical learning theory.

\begin{lemma}[cf. \cite{zhang2017discrimination}, p. 8, Theorem 3.1]
\label{lemma:generalization_bound_gan}
    Assume that the discriminator set $F$ is even, i.e., $f \in \mathcal F$ implies $-f \in \mathcal F$, and that all discriminators are bounded by $\Delta$, i.e., $\| f \|_{\infty} \le \Delta$ for any $f \in \mathcal F$. Let $\hat \mu_{N}$ be an empirical measure of an independent and identical (i.i.d.) sample of size $N$ drawn from a distribution $\mu$. Assume $\nu_{N} \in \mathcal G$ satisfies
    \begin{equation}
        d_{\mathcal F} (\hat \mu_{N}, \nu_{N}) \le \inf_{\nu \in \mathcal G} d_{\mathcal F} (\hat \mu_{N}, \nu) + \phi ~.
    \end{equation}
    Then with probability at least $1 - \delta$, we have
    \begin{equation}
    \label{eq:generalization_bound_gan}
        d_{\mathcal F}(\mu, \nu_{N}) - \inf_{\nu \in \mathcal G} d_{\mathcal F}(\mu, \nu) \le 2 \mathfrak R^{(\mu)}_{N}(\mathcal F) + 2 \Delta \sqrt{\frac{2 \log (\frac{1}{\delta})}{N}} + \phi ~.
    \end{equation}
\end{lemma}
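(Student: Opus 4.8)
The plan is to prove the lemma with the classical three-ingredient recipe for generalization bounds of integral probability metrics: an \emph{oracle decomposition} that reduces the excess discrepancy to a single uniform-deviation term, a \emph{concentration} step via McDiarmid's inequality, and a \emph{symmetrization} step that converts the expected deviation into the Rademacher complexity. Everything rests on the fact that, since $\mathcal F$ is even, the IPM $d_{\mathcal F}$ of (\ref{eq:IPM}) is a symmetric pseudometric: it is nonnegative, satisfies $d_{\mathcal F}(\mu,\nu)=d_{\mathcal F}(\nu,\mu)$ (replace $f$ by $-f$ in the supremum), and obeys the triangle inequality $d_{\mathcal F}(\mu,\nu)\le d_{\mathcal F}(\mu,\eta)+d_{\mathcal F}(\eta,\nu)$ for any third measure $\eta$, because the supremum of a sum is at most the sum of suprema. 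I would set up the oracle decomposition by letting $\nu^{\star}\in\mathcal G$ attain $\inf_{\nu\in\mathcal G}d_{\mathcal F}(\mu,\nu)$ (or approach it if the infimum is not attained) and writing
\[
d_{\mathcal F}(\mu,\nu_N)-d_{\mathcal F}(\mu,\nu^{\star})=(\mathrm I)+(\mathrm{II})+(\mathrm{III}),
\]
with $(\mathrm I)=d_{\mathcal F}(\mu,\nu_N)-d_{\mathcal F}(\hat\mu_N,\nu_N)$, $(\mathrm{II})=d_{\mathcal F}(\hat\mu_N,\nu_N)-d_{\mathcal F}(\hat\mu_N,\nu^{\star})$, and $(\mathrm{III})=d_{\mathcal F}(\hat\mu_N,\nu^{\star})-d_{\mathcal F}(\mu,\nu^{\star})$.

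By the triangle inequality together with the symmetry $d_{\mathcal F}(\hat\mu_N,\mu)=d_{\mathcal F}(\mu,\hat\mu_N)$, both $(\mathrm I)$ and $(\mathrm{III})$ are at most $d_{\mathcal F}(\mu,\hat\mu_N)$; and $(\mathrm{II})\le\phi$, because $d_{\mathcal F}(\hat\mu_N,\nu^{\star})\ge\inf_{\nu\in\mathcal G}d_{\mathcal F}(\hat\mu_N,\nu)$ while the near-optimality hypothesis gives $d_{\mathcal F}(\hat\mu_N,\nu_N)\le\inf_{\nu\in\mathcal G}d_{\mathcal F}(\hat\mu_N,\nu)+\phi$. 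This yields $d_{\mathcal F}(\mu,\nu_N)-\inf_{\nu\in\mathcal G}d_{\mathcal F}(\mu,\nu)\le 2\,d_{\mathcal F}(\mu,\hat\mu_N)+\phi$, so it remains only to control the random scalar $d_{\mathcal F}(\mu,\hat\mu_N)=\sup_{f\in\mathcal F}\bigl(\mathbb E_{x\sim\mu}f(x)-\tfrac1N\sum_{i=1}^N f(x_i)\bigr)$.

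To bound this quantity with high probability I would use McDiarmid's inequality. Viewed as a function of the i.i.d. sample $x_1,\dots,x_N\sim\mu$, the supremum has the bounded-differences property with constant $2\Delta/N$: replacing a single $x_i$ perturbs $\tfrac1N\sum_j f(x_j)$ by at most $2\Delta/N$ uniformly in $f$ (because $\|f\|_{\infty}\le\Delta$), and a uniform perturbation of the argument of a supremum moves the supremum by no more than the same amount. McDiarmid then gives, with probability at least $1-\delta$,
\[
d_{\mathcal F}(\mu,\hat\mu_N)\le \mathbb E\bigl[d_{\mathcal F}(\mu,\hat\mu_N)\bigr]+\Delta\sqrt{\tfrac{2\log(1/\delta)}{N}}.
\]
A ghost-sample symmetrization then bounds $\mathbb E[d_{\mathcal F}(\mu,\hat\mu_N)]$ by the Rademacher complexity $\mathfrak R^{(\mu)}_N(\mathcal F)$: one introduces an independent copy of the sample, passes to it via Jensen's inequality, and inserts Rademacher signs $\sigma_i$ (valid because each $f(x_i)-f(x_i')$ has a symmetric law), reducing the expression to the Rademacher average. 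Substituting these two bounds into the oracle inequality gives (\ref{eq:generalization_bound_gan}).

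The step I expect to need the most care is the constant bookkeeping linking the oracle decomposition to the symmetrization. The decomposition unavoidably produces the factor $2$ in front of $d_{\mathcal F}(\mu,\hat\mu_N)$, so to arrive at the stated coefficient $2\mathfrak R^{(\mu)}_N(\mathcal F)$ rather than twice that, the symmetrization must be carried out under the normalization of $\mathfrak R^{(\mu)}_N(\mathcal F)$ adopted in the cited reference, ensuring $\mathbb E[d_{\mathcal F}(\mu,\hat\mu_N)]\le\mathfrak R^{(\mu)}_N(\mathcal F)$ with coefficient one. A parallel point is verifying the bounded-difference constant $2\Delta/N$ (not $\Delta/N$), which is exactly what produces the leading factor $2\Delta$ in the concentration term.
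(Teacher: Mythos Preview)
The paper does not actually prove this lemma: it is quoted verbatim from \cite{zhang2017discrimination} (Theorem~3.1 there) and used as a black box in the proof of Theorem~\ref{thm:generalization_bound}. So there is no ``paper's own proof'' to compare against.

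Your proposal is the standard and correct argument for this result, and it is essentially the proof given in the cited source. The oracle decomposition via the triangle inequality for $d_{\mathcal F}$ (using evenness of $\mathcal F$ for symmetry), McDiarmid with bounded-difference constant $2\Delta/N$, and ghost-sample symmetrization are exactly the right three moves. Your closing caveat about the constant in front of the Rademacher term is also well-placed: with the usual convention $\mathfrak R^{(\mu)}_N(\mathcal F)=\mathbb E\sup_{f\in\mathcal F}\tfrac1N\sum_i\sigma_i f(x_i)$, symmetrization yields $\mathbb E[d_{\mathcal F}(\mu,\hat\mu_N)]\le 2\,\mathfrak R^{(\mu)}_N(\mathcal F)$, which after doubling would give $4\,\mathfrak R^{(\mu)}_N(\mathcal F)$ rather than the stated $2\,\mathfrak R^{(\mu)}_N(\mathcal F)$; the coefficient $2$ in \eqref{eq:generalization_bound_gan} matches only under the normalization used in \cite{zhang2017discrimination}. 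This is a bookkeeping point, not a gap in your reasoning.
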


Computing the empirical Rademacher complexity of neural networks could be extremely difficult and thus still remains an open problem. Fortunately, the empirical Rademacher complexity can be upper bounded by the corresponding $\varepsilon$-covering number $N(\mathcal F, \varepsilon, \| \cdot \|_{2})$ as the following lemma states.
\begin{lemma}[cf. \cite{bartlett2017spectrally}, Lemma A.5]
\label{covNumBound}
    Suppose $\bm 0 \in \mathcal H$ and all conditions in Lemma \ref{lemma:generalization_bound_gan} hold. Then
    \begin{align}
    \label{formulaCovNumBound}
        \mathfrak R^{(\mu)}_{N}(\mathcal F) \le \inf_{\alpha > 0} \left( \frac{4\alpha}{\sqrt{n}} + \frac{12}{n} \int_{\alpha}^{\sqrt n} \sqrt{\log \mathcal N(l \circ \mathcal H, \varepsilon, \| \cdot \|_{2})} \text{d}\varepsilon \right) ~.
    \end{align}
\end{lemma}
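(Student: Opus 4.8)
The plan is to prove this Dudley-type bound by a chaining argument, the standard device for turning covering numbers into control on the empirical Rademacher complexity. Write $v_f = (l(f(x_1)), \ldots, l(f(x_n))) \in \mathbb R^n$ for the vector of loss-composed evaluations of $f$ on the size-$n$ sample (here $n = N$), so that $\mathfrak R^{(\mu)}_{N}(\mathcal F) = \frac{1}{n}\, \mathbb E_\varepsilon \sup_{f \in \mathcal F} \langle \varepsilon, v_f \rangle$ with $\varepsilon = (\varepsilon_1, \ldots, \varepsilon_n)$ a Rademacher vector. Because $\bm 0 \in \mathcal H$ and the discriminators are bounded (the hypotheses inherited from Lemma \ref{lemma:generalization_bound_gan}), the set $\{v_f\}$ contains the origin and has Euclidean diameter at most $\sqrt n$; this fixes $\sqrt n$ as the coarsest scale of the chain.

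First I would introduce a dyadic sequence of scales. Fix $\alpha > 0$, set $\varepsilon_j = 2^{-j}\sqrt n$ for $j = 0, 1, 2, \ldots$, and let the chain terminate at the finest scale of order $\alpha$. For each $j$ let $C_j$ be a minimal $\varepsilon_j$-cover of $\{v_f\}$ in $\|\cdot\|_2$, so $|C_j| = \mathcal N(l \circ \mathcal H, \varepsilon_j, \|\cdot\|_2)$, and let $\pi_j(v_f) \in C_j$ be a nearest cover element. I then telescope each evaluation vector along the chain, $v_f = \pi_0(v_f) + \sum_{j \ge 1}(\pi_j(v_f) - \pi_{j-1}(v_f)) + (v_f - \pi_J(v_f))$, where the sum runs down to the terminal layer $J$. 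Taking $\langle \varepsilon, \cdot \rangle$ and then $\sup_f$, the coarsest term reduces to the single center $\bm 0$ and drops out, the residual $v_f - \pi_J(v_f)$ has norm at most $\alpha$ and will produce the base term, and each increment layer is treated on its own.

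The engine of the argument is Massart's finite-class lemma: for any finite set $U$ of vectors with $\max_{u \in U}\|u\|_2 \le r$ one has $\mathbb E_\varepsilon \max_{u \in U}\langle \varepsilon, u \rangle \le r\sqrt{2\log|U|}$. For layer $j$ the increments $\pi_j(v_f) - \pi_{j-1}(v_f)$ have norm at most $\varepsilon_j + \varepsilon_{j-1} = 3\varepsilon_j$ and take at most $|C_j|\,|C_{j-1}| \le |C_j|^2$ distinct values, so Massart gives $3\varepsilon_j\sqrt{2\log|C_j|^2} = 6\varepsilon_j\sqrt{\log \mathcal N(l \circ \mathcal H, \varepsilon_j, \|\cdot\|_2)}$. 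For the residual, Cauchy--Schwarz yields $\mathbb E_\varepsilon \sup_f \langle \varepsilon, v_f - \pi_J(v_f)\rangle \le \mathbb E_\varepsilon \|\varepsilon\|_2 \cdot \alpha \le \alpha\sqrt n$, which after the $\frac{1}{n}$ normalization contributes a term of order $\alpha/\sqrt n$; the dyadic rounding of the terminal scale is what promotes its constant to the stated $4$.

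The final step converts the sum $\frac{6}{n}\sum_j \varepsilon_j\sqrt{\log \mathcal N(l \circ \mathcal H, \varepsilon_j, \|\cdot\|_2)}$ into the entropy integral. Since $\varepsilon \mapsto \sqrt{\log \mathcal N(l \circ \mathcal H, \varepsilon, \|\cdot\|_2)}$ is non-increasing, on the finer band $[\varepsilon_{j+1}, \varepsilon_j]$ of length $\varepsilon_j/2$ the integrand dominates its value at $\varepsilon_j$, giving $\varepsilon_j\sqrt{\log \mathcal N(\cdot, \varepsilon_j, \cdot)} \le 2\int_{\varepsilon_{j+1}}^{\varepsilon_j}\sqrt{\log \mathcal N(\cdot, \varepsilon, \cdot)}\,\mathrm d\varepsilon$; summing telescopes the bands into $\int_\alpha^{\sqrt n}\sqrt{\log \mathcal N(l \circ \mathcal H, \varepsilon, \|\cdot\|_2)}\,\mathrm d\varepsilon$, and multiplying the Massart constant $6$ by the comparison factor $2$ produces the prefactor $\frac{12}{n}$. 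Adding the base term and taking the infimum over the arbitrary $\alpha > 0$ yields \eqref{formulaCovNumBound}. I expect the main obstacle to be the constant and limit bookkeeping rather than any conceptual point: one must align the terminal dyadic scale with $\alpha$ so that the telescoped integral has lower limit exactly $\alpha$ (not $\varepsilon_{J+1} < \alpha$), and track the $\sqrt 2$ from Massart, the factor $3$ from the triangle inequality, and the factor $2$ from the monotonicity comparison so that they combine into precisely $4$ and $12$.
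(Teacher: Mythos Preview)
Your chaining argument is correct and is exactly the standard Dudley-entropy proof that underlies this bound. Note, however, that the paper does not supply its own proof of this lemma: it is quoted verbatim as Lemma~A.5 of Bartlett et~al.\ \cite{bartlett2017spectrally} and used as a black box in the proof of Theorem~\ref{thm:generalization_bound}. So there is nothing in the paper to compare against beyond the citation; your proposal simply reconstructs the proof that the cited reference gives, with the same dyadic chain, Massart step, and monotonicity comparison producing the constants $4$ and $12$.
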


\begin{proof}[Proof of Theorem \ref{thm:generalization_bound}]
    Apply Lemma \ref{covNumBound} directly to Theorem \ref{thm:cover_bound}, we can get the following equation
    \begin{align}
    \label{eq:rademarcher_complexity}
    \mathfrak R^{(\mu)}_{N} \le & \inf_{\alpha > 0} \left( \frac{4\alpha}{\sqrt{n}} + \frac{12}{n} \int_{\alpha}^{\sqrt n} \sqrt{\log \mathcal N(\mathcal H_{\lambda}|_{D}, \varepsilon, \| \cdot |_{2})} \text{d}\varepsilon \right) \nonumber\\
    \le & \inf_{\alpha > 0} \left( \frac{4\alpha}{\sqrt{n}} + \frac{12}{n} \int_{\alpha}^{\sqrt n} \frac{R}{\varepsilon} \text{d}\varepsilon \right) \nonumber\\
    \le & \inf_{\alpha > 0} \left[ \frac{4\alpha}{\sqrt{n}} + \frac{12}{n} \sqrt{R} \log \left( \frac{\sqrt{n}}{\alpha} \right) \right] ~.
    \end{align}
Apparently, the infinimum is reached uniquely at $\alpha = 3\sqrt{\frac{R}{n}}$ and the infinitum is as follows,
    \begin{align}
    \mathfrak R^{(\mu)}_{N} \le & \frac{12R}{N} \left[ 1 + \log \left( \frac{N}{3R} \right) \right] ~.
    \end{align}

    Apply eq. (\ref{eq:rademarcher_complexity}) to eq. (\ref{eq:generalization_bound_gan}) of Lemma \ref{lemma:generalization_bound_gan}, we can directly get the fowling equation,
    \begin{align}
        &d_{\mathcal F}(\mu, \nu_{N}) - \inf_{\nu \in \mathcal G} d_{\mathcal F}(\mu, \nu) \\
        &\le \frac{24R}{N} \left( 1 + \log \frac{N}{3R} \right) + 2 \Delta \sqrt{\frac{2 \log (\frac{1}{\delta})}{N}} + \phi ~,
    \end{align}
    which is exactly eq. (\ref{eq:generalization_bound}).
    The proof is completed.
\end{proof}

\begin{figure*}
\centering
\includegraphics[width=0.85\linewidth]{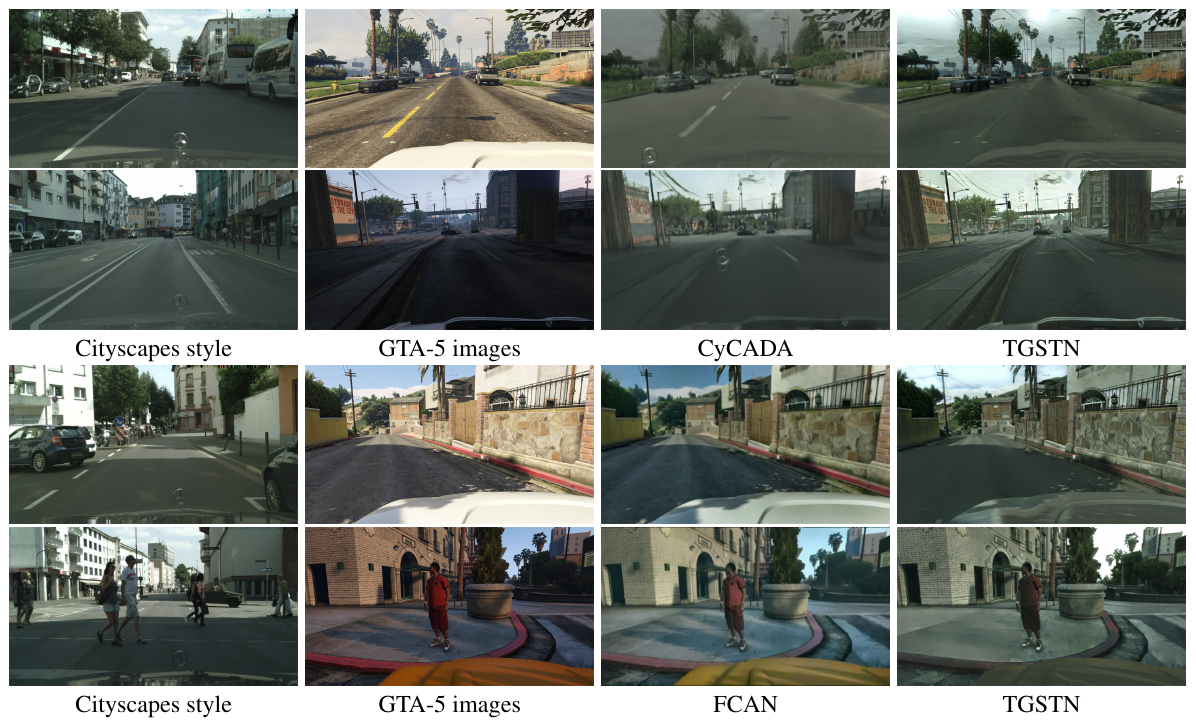}
\caption{Image style transfer from GTA-5 to CityScapes with different approaches. First column: Cityscapes images. Second column: Original GTA-5 images. Third column: Image translation by CyCADA \cite{hoffman2018cycada}, and FCAN \cite{zhang2018fully}. Fourth column: Image translation by the proposed TGSTN.}
\label{fig:imagetrans}
\end{figure*}

\subsection{Task-guided Style Transfer Network}
To address the visual appearance shift between the source and target domains, we propose a task-guided style transfer network (TGSTN) in this study. For each input source-domain image, we use TGSTN to transfer its visual style to be consistent with that of the target domain, while simultaneously retaining its semantic information.

The TGSTN architecture is shown in Fig. \ref{fig:tgstn}. Let $S$ and $T$ represent the source domain and target domain, respectively. Given a set of source domain images $X_{S}$, the corresponding labels $Y_{S}$, and target domain images $X_{T}$ (without annotations), TGSTN aims to learn a generative model $G$ that can transfer the visual style of $X_{S}$ in source domain $S$ to be consistent with that of $T$ while retaining the semantic layout of image $X_{S}$. To transfer the image style, we utilize a discriminator $D$ to distinguish the real target domain images $x_{t}\in X_{T}$ from the original source domain images $x_{s}\in X_{S}$ and the transferred images $G\left(x_{s}\right)$ according to the image style \cite{radford2015unsupervised}. Thus, the discriminator $D$ and the generator $G$ together form an adversarial learning model. The corresponding style adversarial loss $\mathcal{L}_{style}$ can be formulated as:
\begin{align}
    &\mathcal{L}_{style}\left(G,D\right)=\mathbb{E}_{x_t\sim X_T}\left[\log D\left(x_t\right)\right]+\nonumber\\
    &~ ~ ~ ~\mathbb{E}_{x_s\sim X_S}\left[\log \left(1-D\left(x_s\right)\right)+\log \left(1-D\left(G\left(x_s\right)\right)\right)\right].
\label{eq:loss_style}
\end{align}
Similar to previous methods \cite{zhu2017unpaired,ledig2017photo}, we optimize (\ref{eq:loss_style}) through a minimax two-player game. Specifically, we alternately optimize $\mathop{\min}_{G}\mathcal{L}_{style}$ and $\mathop{\max}_{D}\mathcal{L}_{style}$.

Although (\ref{eq:loss_style}) ensures that the generated sample $G\left(x_s\right)$ may share a similar distribution with $X_T$, it fails to maintain semantic information contained in the original sample $x_s$. In order to better preserve the content in $x_s$, we further adopt the semantic consistency loss and perceptual loss to assist the training of $G$.

Let $\Phi$ be a segmentation network pre-trained on $S$ with fixed weights and $\Phi\left(\cdot\right)$ be the segmentation output of $\Phi$. The semantic consistency loss $\mathcal{L}_{sem}$ can be defined as:
\begin{equation}
    \mathcal{L}_{sem}\left(G\right)=\frac{-1}{K}\sum_{k=1}^{K}\sum_{c=1}^{C}y_s^{\left(k,c\right)}
    \log \left(\sigma\left(\Phi\left(G\left(x_s\right)\right)\right)^{\left(k,c\right)}\right),
\label{eq:loss_sem}
\end{equation}
where $K$ and $C$ denote the number of pixels in the image, and the number of categories in the segmentation task, respectively. $\sigma$ denotes the softmax function. This constraint can encourage the transferred image $G\left(x_s\right)$ to possess the same semantic information as the original label $y_s$.

Let $\Phi_f\left(\cdot\right)$ be the output of the last feature extraction layer before the classification layer in $\Phi$. Then, the perceptual loss $\mathcal{L}_{per}$ can be formulated as:
\begin{equation}
    \mathcal{L}_{per}\left(G\right)=\frac{1}{K_f}\sum_{k=1}^{K_f}\left\|\Phi_f\left(G\left(x_s\right)\right)^k-\Phi_f\left(x_s\right)^k\right\|^2,
\label{eq:loss_per}
\end{equation}
where $K_f$ represents the number of pixels in the feature map. With the constraint in (\ref{eq:loss_per}), the transferred image $G\left(x_s\right)$ is encouraged to share an identical high-level feature representation with the original $x_s$ \cite{johnson2016perceptual}.

The full objective function for training TGSTN can therefore be formulated as:
\begin{equation}
    \mathop{\min}_{G}\mathop{\max}_{D}\mathcal{L}_{style}\left(G,D\right)+\lambda_{sem}\mathcal{L}_{sem}\left(G\right)+\lambda_{per}\mathcal{L}_{per}\left(G\right),
\label{eq:loss_tgstn}
\end{equation}
where $\lambda_{sem}$ and $\lambda_{per}$ are two weighting factors.

In the practical implementation of the proposed TGSTN, we employ DeepLab-v2 \cite{deeplab} with the VGG-16 \cite{simonyan2014very} backbone model pre-trained on ImageNet \cite{imagenet} as the segmentation network $\Phi$, which is further fine-tuned with source domain data. The architecture of the discriminator $D$ is the same with $D_{lab}$, as described in Section V-B. The Adam optimizer \cite{kingma2014adam} with a weight decay of $5e-5$ is utilized to train TGSTN, where the learning rate is set at $5e-4$ and $5e-5$ for $G$ and $D$, respectively. Similar to \cite{adaptseg}, we use the ``poly'' learning rate decay policy with a power of $0.9$. The training epochs are set to $5$. Each mini-batch consists of 2 source-domain images and 2 target-domain images. We resize all the images to $1024\times512$ pixels during training. $\lambda_{sem}$ and $\lambda_{per}$ in (\ref{eq:loss_tgstn}) are set at $10$ and $1$, respectively.

Fig. \ref{fig:imagetrans} shows the visual comparison of the transferred images by TGSTN from GTA-5 to Cityscapes with two state-of-the-art generative methods: CyCADA \cite{hoffman2018cycada}, and FCAN \cite{zhang2018fully}. We use the images presented in their original papers for comparison. Compared to CyCADA, TGSTN can better maintain the detailed semantic information from the source domain. Take the results in the second row of Fig. \ref{fig:imagetrans} for example. While CyCADA generates an unreal auto logo of Benz (which appears a lot in the target domain) in the center area of the image, TGSTN avoids this forged texture. Compared to the transferred images of FCAN, it can also be observed that TGSTN can generate a more similar visual style with the target domain while keeping more detailed semantic information from the source domain.

%
%

\ifCLASSOPTIONcaptionsoff
  \newpage
\fi



%

\bibliographystyle{IEEEtran}

\bibliography{SEGAN}

%

%




\end{document}